\def\ttl@useclass#1#2{%
  \@ifstar
    {\ttl@labeltrue\@dblarg{#1{#2}}}
    {\ttl@labeltrue\@dblarg{#1{#2}}}}
\DeclarePairedDelimiter{\inner}{\langle}{\rangle}
\newcommand{\Wass}{\mathcal{W}}
\newcommand{\calX}{\mathcal{X}}
\newcommand{\calY}{\mathcal{Y}}
\newcommand{\code}[1]{\texttt{#1}}
\DeclareMathOperator{\diag}{diag}
\DeclareMathOperator{\supp}{supp}
\DeclareMathOperator*{\argmin}{arg\,min}
\DeclareMathOperator*{\argmax}{arg\,max}
\newcommand{\vmu}{{\bm{\mu}}}
\newcommand{\vnu}{{\bm{\nu}}}
\newcommand{\va}{{\bm{a}}}
\newcommand{\vb}{{\bm{b}}}
\newcommand{\vf}{{\bm{f}}}
\newcommand{\vg}{{\bm{g}}}
\newcommand{\vw}{{\bm{w}}}
\newcommand{\N}{{\mathbb{N}}}
\newcommand{\R}{{\mathbb{R}}}
\newcommand{\sX}{{\mathbb{X}}}
\newcommand{\sY}{{\mathbb{Y}}}
\newcommand{\tP}{{\mathbf{P}}}
\newcommand{\tK}{{\mathbf{K}}}
\newcommand{\ones}{{\mathbf{1}}}
\newcommand{\zeros}{{\mathbf{0}}}
\theoremstyle{plain}
\newtheorem{theorem}{Theorem}[section]
\newtheorem{proposition}[theorem]{Proposition}
\newtheorem{lemma}[theorem]{Lemma}
\theoremstyle{definition}
\theoremstyle{remark}
\newtheorem{remark}[theorem]{Remark}
\def\c{\mathsf{c}}
\title{Quantization-based Bounds on the Wasserstein Metric}
\author{%
  Jonathan Bobrutsky \\
  Department of Statistics and Operations Research\\
  Tel Aviv University\\
  \texttt{jbobrutsky@gmail.com} \\
  \AND
  Amit Moscovich \\
  Department of Statistics and Operations Research\\
  Tel Aviv University\\
  \texttt{mosco@tauex.tau.ac.il}}
\begin{document}

\maketitle

\begin{abstract}
The Wasserstein metric has become increasingly important in many machine learning applications such as generative modeling, image retrieval and domain adaptation. Despite its appeal, it is often too costly to compute. This has motivated approximation methods like entropy-regularized optimal transport, downsampling, and subsampling, which trade accuracy for computational efficiency.  In this paper, we consider the challenge of computing efficient approximations to the Wasserstein metric that also serve as strict upper or lower bounds.  Focusing on discrete measures on regular grids, our approach involves formulating and exactly solving a Kantorovich problem on a coarse grid using a quantized measure and specially designed cost matrix, followed by an upscaling and correction stage.  This is done either in the primal or dual space to obtain valid upper and lower bounds on the Wasserstein metric of the full-resolution inputs.  We evaluate our methods on the DOTmark optimal transport images benchmark, demonstrating a 10×–100× speedup compared to entropy-regularized OT while keeping the approximation error below 2\%.
\end{abstract}

\section{Introduction} \label{sec:introduction}
The Wasserstein metric is a basic tool in machine learning with widespread adoption in various domains, including computer vision, natural language processing, and computational biology \citep{arjovskyWassersteinGenerativeAdversarial2017, kusnerWordEmbeddingsDocument2015,schiebingerOptimalTransportAnalysisSingleCell2019}.
While it is fast to compute in particular cases, such as one-dimensional distributions, in general settings computing the Wasserstein metric can be very expensive.
For example, consider the calculation of the Wasserstein metric between $N \times N$ images that we treat as discrete measures on a regular grid.
Each image has $N^2$ pixels, so computing the Wasserstein metric involves solving a linear program with $N^2 \times N^2$ variables and $\Theta(N^2)$ constraints. This is typically done using a network simplex algorithm whose worst-case runtime in this case is $O(N^6 \log N)$ \citep{peyreComputationalOptimalTransport2019}. For 3D signals, the computational cost is even worse at $O(N^9 \log N)$.
This severely limits the adoption of the Wasserstein metric across many domains and in particular for 2D or 3D discrete signals.
As a result of this, many authors have developed fast approximations of the Wasserstein metric \citep{
cuturiSinkhornDistancesLightspeed2013, 
DeshpandeZhangSchwing2018, 
gerberMultiscaleStrategiesComputing2017, 
shirdhonkarApproximateEarthMovers2008}. 

We consider the challenge of designing approximations that also serve as strict bounds on the $p$-Wasserstein metric between discrete measures on a regular grid.
In this paper we introduce several efficient algorithms that are based on quantization (or downscaling) of the inputs onto a coarse grid and sum-pooling the measures.
We then construct a particular cost matrix for the coarse grid that takes the original measures into account.
This is followed by a correction stage that upscales the solution on the coarse grid to a solution on the original grid and corrects the marginals using iterative proportional-fitting 
The upscaling and correction stage is done separately in the primal and dual spaces, to obtain both upper and lower bounds (respectively).
Finally, to guarantee the correctness of the bounds without relying on the convergence of the proportional fitting procedure, we introduce an additional total variation correction term.

We developed an efficient GPU implementation of the algorithms described in this paper using the JAX package and tested it on the DOTmark optimal transport benchmark \citep{schrieberDOTmarkBenchmarkDiscrete2017}.
We demonstrate that our algorithms achieve significant computational speedups compared to a baseline derived from the entropy-regularized OT. Up to 10$\times$ for upper bounds and up to 100$\times$ speed up for the lower bounds, while maintaining a low approximation error (<2\%) relative to the exact Wasserstein distance.

\subsection{Related work}\label{sec:related_work}
In recent years, the efficient computation of the Wasserstein distance has been a focal point of research, leading to several innovative approaches. The entropy-regularized Sinkhorn algorithm by \cite{cuturiSinkhornDistancesLightspeed2013} remains a cornerstone, offering significant computational speedups despite introducing bias. \cite{AltschulerWeedRigollet2018} advanced this with near-linear time approximation algorithms. Multiscale methods, as discussed by \cite{gerberMultiscaleStrategiesComputing2017} and \cite{merigotMultiscaleApproachOptimal2011}, employ hierarchical strategies to enhance computational efficiency. The benefits of the multiscale approach demonstrated by \cite{feydyFastScalableOptimal2021} for analyzing brain tractograms by adapting the Sinkhorn algorithm. Other quantization-based methods, notably by \cite{beugnotImprovingApproximateOptimal2021}, improve approximation performance, for sampled continuous measures. For grid data, \cite{solomonConvolutionalWassersteinDistances2015} and \cite{chenComputingWasserstein$p$Distance2022} exploit structural advantages for speed. In machine learning, \cite{CourtyFlamaryTuiaRakotomamonjy2017} and \cite{Alvarez-MelisFusi2020a} demonstrate optimal transport's versatility in domain adaptation. \cite{MontesumaMboulaSouloumiac2025} offer a comprehensive survey of recent advances, underscoring the method's growing impact.

\section{Background}\label{sec:background}
\paragraph{Notation}
We denote the non-negative real numbers by $\R_+$ and the set of integers $\{1,\dots,n\}$ by $[n]$. The tensor product is denoted by $\otimes$ whereas pointwise multiplication and division are denoted by $\odot$ and $\oslash$, respectively. The $L^p$ norm of a vector is $\|\cdot\|_p$.
The all-ones column vector is $\ones_n \in \R^n$. The standard vector inner product is denoted by $\inner{\cdot,\cdot}$ and we use the same notation for the inner products of matrices.
The support of a matrix $A \in \R^{n \times m}$ is the set of indices of nonzero elements $\supp{A} = \{(i,j) \in [n] \times [m] \mid A_{i,j} \neq 0 \}$. The cardinality of a set $S$ is denoted by $\#S$. $\delta_x$ is the Dirac delta function at point $x$. Complete list of notations used in the paper is provided in \cref{tbl: notations}.

\subsection{Optimal Transport}
In the following, we give a quick review of basic concepts from optimal transport and refer the reader to \citet{peyreComputationalOptimalTransport2019} for a more thorough introduction.
Consider two discrete probability measures $\mu$, $\nu$ with point masses at $\calX = \{x_1,\dots,x_n\}$ and $\calY = \{y_1,\dots,y_m\}$ respectively. We can express the measures as a sum of Dirac delta functions,
\begin{equation}
    \mu = \sum_{i=1}^n \mu_i \delta_{x_i},  \qquad \nu = \sum_{j=1}^m \nu_j \delta_{y_j}\ .
\end{equation}
We identify the measures with their non-negative coefficient vectors $\vmu\in\R_+^n,\vnu\in\R_+^m$.
Since $\mu$ and $\nu$ are probability measures, we must have $\vmu \in \Sigma_n$ and $\vnu \in \Sigma_m$ where
\begin{align}
    \Sigma_N
    :=
    \big\{
    (p_1, \ldots, p_N) \in \R_+^N: \, p_1 + \cdots + p_N = 1
    \big\}
\end{align}
is the probability simplex.
The set of \emph{coupling} matrices between $\mu$ and $\nu$ is \begin{align}
    \Pi(\vmu,\vnu)
    :=
    \left\{
    \pi\in\R_+^{n \times m}
    \middle|
    \pi\ones_m=\vmu,\,
    \pi^{\top}\ones_n=\vnu
    \right\}.
\end{align}
Each coupling can be viewed as a transport plan between $\mu$ and $\nu$ where $\pi_{ij}$ is the amount of mass transported from $x_i$ to $y_j$.
The marginal constraints $\pi\ones_m=\vmu$ mean that the entire source measure $\mu$ is transported, whereas the constraints $\pi^{\top}\ones_n=\vnu$ mean that this transport results in the target  measure $\nu$.
In particular, the set $\Pi(\vmu,\vnu)$ always contains the trivial coupling $\pi_{\otimes} := \vmu\otimes\vnu$ that distributes every point mass in $\calX$ to all point masses in $\calY$ proportionately to $\vnu$. Let $C \in \R_+^{n \times m}$ be a ground-cost matrix, where  $C_{ij}$ represents the cost of transporting a unit mass from $x_i\in\calX$ to $y_j\in\calY$.
The Kantorovich problem is the minimization problem that seeks a cost-minimizing coupling between $\mu$ and $\nu$,
\begin{align}
    L_C(\mu,\nu) := \min_{\pi\in\Pi(\vmu,\vnu)} \inner{\pi, C}.
\end{align}
This is a linear optimization problem with linear constraints.
It admits a dual program,
\begin{equation}
    L_C(\mu,\nu) = \max_{(\vf, \vg)\in\mathcal{R}(C)} \inner{\vf,\vmu} + \inner{\vg,\vnu},
\end{equation}
where
\begin{align}
    \mathcal{R}(C)
    :=
    \{
    (\vf,\vg)\in\R^n \times \R^m
    |
    \forall i,j: f_i + g_j \le C_{ij}
    \}
\end{align}
is the set of admissible dual potentials, also known as Kantorovich potentials.
Given a distance metric $\rho: \calX \times \calX \rightarrow \R_+$, for any $p \ge 1$ the \emph{Wasserstein}-$p$ metric is a metric over the space of discrete probability measures with point masses at $\calX = \{x_1, \ldots, x_n\}$, defined as
\(
    \Wass_p(\mu,\nu) :=  L_C(\mu,\nu)^{\frac{1}{p}}
\)
where
\(
    C_{ij} = \rho(x_i,y_j)^p.
\)

\subsection{Measure coarsening} \label{sec:coarsening}
Consider a $d$-dimensional regular grid $\calX = [N]^d$.
Suppose that $N$ is a multiple of some scale factor $\kappa \in \mathbb{N}$. In that case we may subdivide the grid along the axes into a set $\sX = \{ X_1,\dots,X_{n^d}\}$ of non-overlapping hypercubes of cardinality $\kappa^d$ that cover the entire grid $\calX$.
We define the coarse grid
$\tilde{\calX} := \{ \tilde{x}_1,\dots,\tilde{x}_{n^d}\}$
as the set of all hypercube centers, with $\tilde{x}_k = \code{mean}(X_k)$. Given a discrete measure $\mu\in\Sigma_{N^d}$ over the grid $\calX$, we define a coarsened discrete measure by placing the mass associated with each hypercube at its center point,
\(
    \tilde{\mu} = \sum_{k=1}^{n^d} \mu(X_k)\delta_{\tilde{x}_k}.
\)
The coarsening of the regular grid corresponds to the \code{SumPool} and \code{AvgPool} operations on the measure and the coordinates, respectively, with size and stride $\kappa$.

\section{Methods}\label{sec:methods}
In this section, we describe several algorithms for computing bounds of the Wasserstein distance $\Wass_p(\mu,\nu)$ on a regular grid.

\subsection{Bounds based on Entropy regularized OT}\label{sec: entropy regulariztion bounds}
Entropy regularized optimal transport, also known as Sinkhorn distance \citep{cuturiSinkhornDistancesLightspeed2013}, adds an entropy term $\mathcal{H}$ to the primal:
\begin{align}
    L_C^\varepsilon(\mu,\nu) := \min_{\pi\in\Pi(\vmu,\vnu)} \inner{\pi, C} - \varepsilon\mathcal{H}(\pi).
\end{align}
This makes the problem strongly convex and solvable using Sinkhorn iterations 
\citep{knoppConcerningNonnegativeMatrices1967}. In this section we will show how computing entropy regularized OT can be used to construct upper and lower bounds on the exact Wasserstein distance.
\paragraph{Lower bound} \label{sec: reg lower bound}
The dual form of the Sinkhorn distance is
\begin{align}
    L_C^\varepsilon(\mu,\nu)
    :=
    \max_{\vf\in\R^n, \vg\in\R^m} \inner{\vf,\vmu}
    + \inner{\vg,\vnu}
    - \varepsilon\inner{e^{\vf/\varepsilon}, K e^{\vg/\varepsilon}} 
\end{align}
where $K_{ij} := e^{-C_{ij}/\varepsilon}$ is the Gibbs kernel. The algorithmic solution, defined by the use of a finite number of iterations $t$ that achieves some stopping criteria, is known to satisfy a lower bound. Summarized in the following proposition.
\begin{proposition}
    Let $\hat\vf_\varepsilon^{(t)}, \hat\vg_\varepsilon^{(t)}$ be the iterations of the Sinkhorn distance algorithm in step $t\in\N$.
    \begin{align}
        \inner{\hat\vf_\varepsilon^{(t)}, \vmu} + \inner{\hat\vg_\varepsilon^{(t)}, \vnu} \le L_C(\mu,\nu)
    \end{align}
    as soon as $t \ge 1$.
\end{proposition}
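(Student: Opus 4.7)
My plan is to show that after one full Sinkhorn update the pair $(\hat{\vf}_\varepsilon^{(t)}, \hat{\vg}_\varepsilon^{(t)})$ lies in the dual-feasible region $\mathcal{R}(C)$ of the unregularized Kantorovich problem. Once that is in hand, the inequality is immediate from weak duality: every $(\vf, \vg) \in \mathcal{R}(C)$ satisfies $\inner{\vf, \vmu} + \inner{\vg, \vnu} \le L_C(\mu,\nu)$, since this expression is exactly the Kantorovich dual objective being maximized.

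To establish dual feasibility I would pass the Sinkhorn matrix-scaling iteration to the log-domain. Writing the scalings as $\hat{f}_i = \varepsilon \log u_i$ and $\hat{g}_j = \varepsilon \log v_j$, the multiplicative update $v \leftarrow \vnu \oslash (K^\top u)$ becomes
\begin{equation*}
    \hat{g}_j = \varepsilon \log \nu_j - \varepsilon \log \sum_i e^{(\hat{f}_i - C_{ij})/\varepsilon}.
\end{equation*}
Two elementary observations then finish the argument. First, because $\vnu$ is a probability vector, $\nu_j \le 1$ and hence $\varepsilon \log \nu_j \le 0$. Second, the log-sum-exp inequality $\log \sum_i e^{a_i} \ge \max_i a_i$ yields $-\varepsilon \log \sum_i e^{(\hat{f}_i - C_{ij})/\varepsilon} \le \min_i(C_{ij} - \hat{f}_i)$. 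Combining the two gives $\hat{g}_j \le C_{ij} - \hat{f}_i$ for every pair $(i,j)$, which is precisely the constraint defining $\mathcal{R}(C)$. Weak duality then concludes the proof.

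I do not anticipate a substantive obstacle here; the content is essentially that log-sum-exp is a soft-min that dominates the true min, combined with the trivial bound $\log \mu_i, \log \nu_j \le 0$ for probability weights. The only minor care is needed for indices where $\mu_i$ or $\nu_j$ vanishes: there the associated potential is $-\infty$ and the feasibility inequality is satisfied vacuously. The hypothesis $t \ge 1$ is exactly what ensures that both potentials have been passed through at least one soft-min update, so that the structure above applies; the raw initialization $(\hat{\vf}_\varepsilon^{(0)}, \hat{\vg}_\varepsilon^{(0)})$ carries no such guarantee and the bound could fail before any Sinkhorn step has been taken.
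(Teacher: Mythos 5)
Your proof is correct and is essentially the argument behind the result the paper cites (Propositions 4.5 and 4.8 of Peyr\'e and Cuturi): the Sinkhorn $g$-update is a soft $c$-transform, the log-sum-exp bound $\log\sum_i e^{a_i}\ge\max_i a_i$ together with $\log\nu_j\le 0$ shows it is dominated by the hard $c$-transform, hence $(\hat\vf_\varepsilon^{(t)},\hat\vg_\varepsilon^{(t)})\in\mathcal{R}(C)$ after one full iteration, and weak duality finishes the proof. The paper itself gives no further details, so your write-up simply makes the cited argument explicit, including the correct handling of vanishing weights.
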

This follows directly from \citet[Propositions~4.5,4.8]{peyreComputationalOptimalTransport2019}, so we can define $\underline{\Wass}_{p,\varepsilon}(\mu,\nu) := (\inner{\hat\vf_\varepsilon^{(t)}, \vmu} + \inner{\hat\vg_\varepsilon^{(t)}, \vnu})^\frac{1}{p} \le \Wass_p(\mu,\nu)$ for all $p \ge 1$.

\paragraph{Upper bound}\label{sec: reg upper bound}
Consider $d$-dimensional regular grids with side length $N \in \N$, $\calX=\calY = [N]^d$, with discrete measures $\vmu,\vnu\in\Sigma_{N^d}$. Although the converged regularized optimal coupling
\begin{align}
    \pi_\varepsilon^* = \argmin_{\pi\in\Pi(\vmu,\vnu)} \inner{\pi, C} - \varepsilon\mathcal{H}(\pi)
\end{align}
defines an upper bound on the optimal transport $\inner{\pi_\varepsilon^*, C} \ge L_C(\mu,\nu)$, the algorithmic solution $\hat\pi_\varepsilon^{(t)}$ does not. Since the marginals $\hat\vmu_\varepsilon^{(t)}=\hat\pi_\varepsilon^{(t)}\ones_N,\, \hat\vnu_\varepsilon^{(t)}=(\hat\pi_\varepsilon^{(t)})^\top\ones_N$ do not identify with the couplings $\vmu,\vnu$.
We bound the effect of this difference using the weighted total variation. For some $x_0 \in \calX$, using distance weights $\vw=\{\rho(x_0,x)^p\}_{x\in\calX}$ the Wasserstein-$p$ distance is controlled by \emph{weighted} total variation \citep{villaniOptimalTransportOld2009},
\begin{equation}\label{eq: weighted total variation}
    \mathcal{TV}_p^\vw(\mu,\nu) := 2^{1-\frac{1}{p}} \inner{\vw,|\vmu-\vnu|}^{\frac{1}{p}} \ge \Wass_p(\mu,\nu).
\end{equation}
Here $|\cdot|$ is the element-wise absolute value.

Defining the total variation corrected regularization-based upper bound
\begin{equation}\label{eq: reg upper bound def}
    \overline{\Wass}_{p,\varepsilon}(\mu,\nu)
    :=
    \inner{\hat\pi_\varepsilon^{(t)}, C}^\frac{1}{p} + \Delta_{\hat{\mu}_\varepsilon^{(t)}} +
    \Delta_{\hat{\nu}_\varepsilon^{(t)}} ,
\end{equation}
where $\Delta_{\hat{\mu}_\varepsilon^{(t)}} = \mathcal{TV}_p^\vw(\hat{\mu}_\varepsilon^{(t)}, \mu),\,
    \Delta_{\hat{\nu}_\varepsilon^{(t)}} = \mathcal{TV}_p^\vw(\nu, \hat{\nu}_\varepsilon^{(t)})$ are the marginal corrections
with weights $\vw=\{\rho(\bar{x},x_i)\}_i$ are taken from the center of the measure $\bar{x}=\code{mean}(\calX)$.
Using the triangle inequality for the Wasserstein metric, we can write
\begin{lemma}\label{lm: triangle inequality upper bound}
    Let $\mu,\hat{\mu},\nu,\hat{\nu} \in \Sigma_{N^d}$ discrete measures on $\calX=[N]^d$, and $\hat{\pi} \in \Pi(\hat{\mu},\hat{\nu})$ is a coupling between $\hat{\mu}$ and $\hat{\nu}$. For $p \ge 1$,
    \begin{align}
        \Wass_p(\mu,\nu) \le  \inner{\hat{\pi},C}^{\frac{1}{p}} + \Delta_{\hat\mu} + \Delta_{\hat\nu}
    \end{align}
\end{lemma}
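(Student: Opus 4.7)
The plan is to apply the triangle inequality for the Wasserstein metric together with the weighted total variation bound from equation (\ref{eq: weighted total variation}). Specifically, since $\Wass_p$ is a metric on $\Sigma_{N^d}$ (when the ground cost comes from a metric $\rho$), we have
\begin{equation*}
    \Wass_p(\mu,\nu) \le \Wass_p(\mu,\hat\mu) + \Wass_p(\hat\mu,\hat\nu) + \Wass_p(\hat\nu,\nu),
\end{equation*}
so it suffices to bound each of the three terms on the right-hand side by the corresponding term on the right-hand side of the claim.

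First I would handle the middle term. Since $\hat\pi \in \Pi(\hat\mu,\hat\nu)$ is an admissible coupling (but not necessarily optimal), the definition of $L_C$ as an infimum immediately gives $L_C(\hat\mu,\hat\nu) \le \inner{\hat\pi,C}$, and therefore $\Wass_p(\hat\mu,\hat\nu) = L_C(\hat\mu,\hat\nu)^{1/p} \le \inner{\hat\pi,C}^{1/p}$.

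Next I would bound the two boundary terms by invoking equation (\ref{eq: weighted total variation}), which gives $\Wass_p(\mu,\hat\mu) \le \mathcal{TV}_p^\vw(\mu,\hat\mu)$ and $\Wass_p(\hat\nu,\nu) \le \mathcal{TV}_p^\vw(\hat\nu,\nu)$ for the weights $\vw = \{\rho(\bar x,x_i)^p\}_i$. Since the weighted total variation depends on the measures only through the elementwise absolute difference $|\vmu - \hat\vmu|$, it is symmetric in its arguments, so $\mathcal{TV}_p^\vw(\mu,\hat\mu) = \mathcal{TV}_p^\vw(\hat\mu,\mu) = \Delta_{\hat\mu}$ and similarly $\mathcal{TV}_p^\vw(\hat\nu,\nu) = \Delta_{\hat\nu}$. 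Summing the three bounds yields the claim.

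I do not anticipate any real obstacle: the argument is essentially a one-line composition of the triangle inequality with the Wasserstein/TV inequality already quoted in the excerpt. The only subtlety worth flagging explicitly is that $\Wass_p$ must be a genuine metric (so that the triangle inequality applies) — this requires $\rho$ to be a metric on $\calX$, which is the standing assumption of the Wasserstein-$p$ definition given in the background section, and in the application of interest $\calX = [N]^d$ with $\rho$ a norm-induced metric, so this is automatic.
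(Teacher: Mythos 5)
Your proposal is correct and matches the paper's own proof essentially step for step: the triangle inequality for $\Wass_p$, the suboptimality bound $\Wass_p(\hat\mu,\hat\nu) \le \inner{\hat\pi,C}^{1/p}$ for the middle term, and the weighted total variation bound of \cref{eq: weighted total variation} for the two boundary terms. The extra remark on the symmetry of $\mathcal{TV}_p^\vw$ is a harmless clarification of notation the paper leaves implicit.
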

Combining this with \eqref{eq: reg upper bound def} shows that
\(
     \Wass_{p}(\mu,\nu) \le \overline{\Wass}_{p,\varepsilon}(\mu,\nu).
\)
\begin{proposition}\label{th: $L^1$ upper bound on weighted total variation}
    Let $\mu,\hat{\mu},\nu,\hat{\nu} \in \Sigma_n$ be discrete measures in $\calX$ and $\xi > 0$, satisfying the convergence criteria $\|\hat\vmu - \vmu\|_1 + \|\vnu - \hat{\vnu}\|_1 <  \xi$. The sum of the marginal corrections is bounded,
    \begin{equation}
        \Delta_{\hat\mu} + \Delta_{\hat\nu}
        < 2^{2-\frac{2}{p}} \xi^\frac{1}{p}r
    \end{equation}
    where $r:=\max_{x\in\calX}\{\rho(\bar{x},x)\}$ is radius of  $\calX$.
\end{proposition}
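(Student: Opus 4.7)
The bound is really an exercise in unpacking the definition of $\mathcal{TV}_p^\vw$ and in using the concavity of $t \mapsto t^{1/p}$, so I would split it into three short steps.

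\textbf{Step 1: pointwise bound on the weights.} The weights used in the definition of $\Delta_{\hat\mu}$ and $\Delta_{\hat\nu}$ are $w_i = \rho(\bar x, x_i)^p$ (as in \eqref{eq: weighted total variation}), and by the definition of the radius $r = \max_x \rho(\bar x, x)$ we have $w_i \le r^p$ for every $i$. Hence, by the definition in \eqref{eq: weighted total variation},
\begin{align}
    \Delta_{\hat\mu} = 2^{1-\frac{1}{p}}\inner{\vw, |\hat\vmu - \vmu|}^{\frac{1}{p}} \le 2^{1-\frac{1}{p}}\bigl(r^p\|\hat\vmu-\vmu\|_1\bigr)^{\frac{1}{p}} = 2^{1-\frac{1}{p}}\, r\, \|\hat\vmu-\vmu\|_1^{\frac{1}{p}},
\end{align}
and similarly for $\Delta_{\hat\nu}$.

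\textbf{Step 2: combine the two marginal errors via concavity.} Writing $a := \|\hat\vmu - \vmu\|_1$ and $b := \|\vnu - \hat\vnu\|_1$, the hypothesis gives $a + b < \xi$. Since $p \ge 1$, the function $t \mapsto t^{1/p}$ is concave on $\R_+$, so Jensen's inequality yields
\begin{align}
    a^{\frac{1}{p}} + b^{\frac{1}{p}} \le 2 \left(\frac{a+b}{2}\right)^{\frac{1}{p}} = 2^{1-\frac{1}{p}}(a+b)^{\frac{1}{p}} < 2^{1-\frac{1}{p}} \xi^{\frac{1}{p}},
\end{align}
where the last inequality is strict because $a + b < \xi$ strictly.

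\textbf{Step 3: assemble.} Adding the two bounds from Step 1 and substituting the inequality from Step 2,
\begin{align}
    \Delta_{\hat\mu} + \Delta_{\hat\nu} \le 2^{1-\frac{1}{p}} r \bigl(a^{\frac{1}{p}} + b^{\frac{1}{p}}\bigr) < 2^{1-\frac{1}{p}} r \cdot 2^{1-\frac{1}{p}} \xi^{\frac{1}{p}} = 2^{2-\frac{2}{p}}\, \xi^{\frac{1}{p}}\, r,
\end{align}
which is the claimed bound.

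\textbf{Main obstacle.} There is no real obstacle, but the single non-mechanical step is Step~2: one needs to recognize that $a^{1/p}+b^{1/p}$ is not directly dominated by $(a+b)^{1/p}$ (that goes the wrong way for $p\ge 1$), and that the correct move is concavity/Jensen, which introduces the extra factor $2^{1-1/p}$ and explains the exponent $2-2/p$ appearing in the statement. Care must also be taken to preserve the strict inequality throughout, which is immediate once the hypothesis $a+b<\xi$ is used strictly in the final chain.
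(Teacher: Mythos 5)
Your proof is correct and follows essentially the same route as the paper's: bound the weights $w_i=\rho(\bar x,x_i)^p$ by $r^p$, then apply Jensen's inequality (concavity of $t\mapsto t^{1/p}$) to combine the two $L^1$ errors, then invoke the convergence criterion. The only difference is organizational — you bound each $\Delta$ separately before summing, while the paper manipulates the sum in one chain — which does not change the argument.
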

Proofs for \cref{lm: triangle inequality upper bound,th: $L^1$ upper bound on weighted total variation} are provided in \cref{sc: proof of triangle inequality upper bound}.

\begin{figure}[t]
    \centering
    \begin{minipage}{0.5\textwidth}
        \centering
        \includegraphics[width=\textwidth]{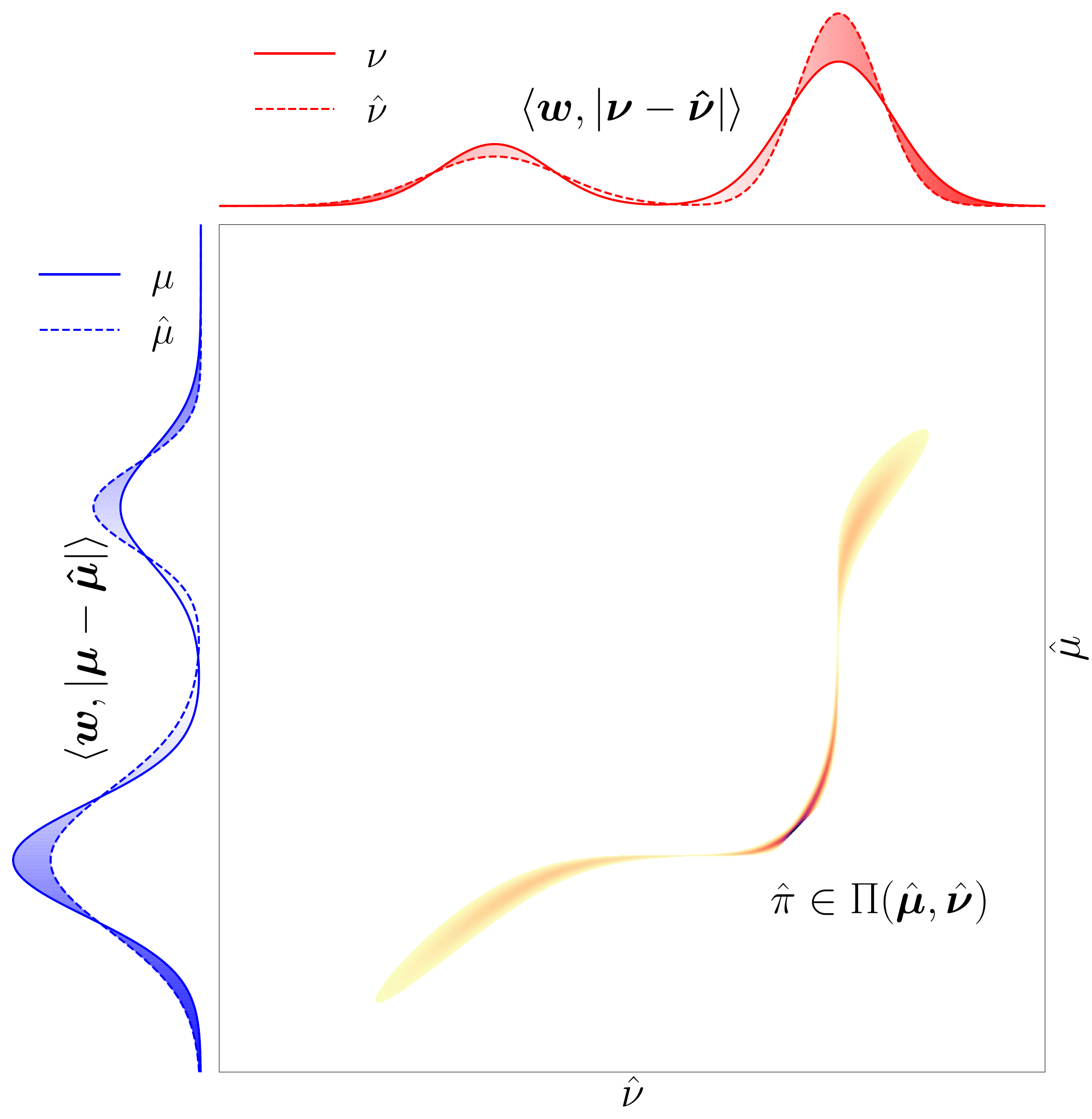}
    \end{minipage}
    \qquad\qquad\qquad
    \begin{minipage}{0.3\textwidth}
        \vspace{3em}
        \centering
        \begin{tikzcd}[ampersand replacement=\&]
            \textcolor{blue}{\mu} \\
            \&\& \textcolor{blue}{{\hat\mu}} \\
            \\
            \\
            \&\& \textcolor{red}{{\hat\nu}} \\
            \textcolor{red}{\nu}
            \arrow["{\Wass_p(\mu,\hat\mu)}"{description}, color=blue, from=1-1, to=2-3]
            \arrow["{\le \mathcal{TV}_p^\vw(\mu, \hat\mu)}"{description, pos=0.6}, shift left=4, color=blue, curve={height=-12pt}, dashed, from=1-1, to=2-3]
            \arrow["{\Wass_p(\mu,\nu)}"{description}, color=teal, Rightarrow, from=1-1, to=6-1]
            \arrow["{\Wass_p(\hat\mu,\hat\nu)}"{description, pos=0.4}, color=teal, from=2-3, to=5-3]
            \arrow["{\le \inner{\hat\pi, C}^\frac{1}{p}}"{description, pos=0.6}, shift left=3, color=teal, curve={height=-12pt}, dashed, from=2-3, to=5-3]
            \arrow["{\Wass_p(\hat\nu,\nu)}"{description}, color=red, from=5-3, to=6-1]
            \arrow["{\le \mathcal{TV}_p^\vw(\hat\nu,\nu)}"{description, pos=0.4}, shift left=3, color=red, curve={height=-12pt}, dashed, from=5-3, to=6-1]
        \end{tikzcd}
    \end{minipage}
    \caption{Optimal transport bounds visualization. Left: Illustration of optimal transport between discrete probability measures. Right: Diagram showing the relationship between measures and their Wasserstein distances.}
    \label{fig:visualization}
\end{figure}

\subsection{Weighted-cost upper bound}\label{sec: Weighted-Cost Upper Bound}
In this subsection we consider an approach to bound Wasserstein distance by down-scaling the grid $\calX$ to $\tilde{\calX}=\tilde{\calY}$ and the measures to $\tilde\vmu,\tilde\vnu \in \Sigma_{n^d}$ using regular hypercubes as described in \cref{sec:coarsening}. We define the marginally weighted coarse cost

\begin{align}\label{eq: marginally weighted cost}
    \bar{C}_{k\ell}
    :=
    \frac{1}{\mu(X_k)\nu(Y_\ell)} \sum_{\substack{
    x\in X_k, y\in Y_\ell}}
    \rho(x,y)^p \mu(x)\nu(y).
\end{align}
It follows that $\bar{C} = \code{SumPool}(C\odot\pi_\otimes, \kappa) \oslash \code{SumPool}(\pi_\otimes, \kappa)$, which can be used efficiently for small enough grids. Then we compute the optimal coupling for the marginally weighted coarse cost using network simplex solver \citep{bonneelDisplacementInterpolationUsing2011}, defining an upper bound
\begin{align}
    \overline{\Wass}_p^{\otimes} := L_{\bar{C}}(\tilde{\mu},\tilde{\nu})^\frac{1}{p}.
\end{align}

\begin{theorem}\label{th: weighted cost UB}
    The optimal transport loss under the marginally weighted coarse cost $L_{\bar{C}}(\tilde{\mu},\tilde{\nu})$ is an upper bound to the optimal transport loss $L_C(\mu,\nu)$, and similarly for the Wasserstein distance,
    \begin{equation}
        L_{\bar{C}}(\tilde{\mu},\tilde{\nu})^\frac{1}{p} \ge \Wass_p(\mu,\nu).
    \end{equation}
\end{theorem}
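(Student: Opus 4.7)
The plan is to exhibit, for any coarse coupling $\tilde{\pi}\in\Pi(\tilde{\mu},\tilde{\nu})$, a \emph{fine} coupling $\pi\in\Pi(\mu,\nu)$ whose cost under $C$ equals the cost of $\tilde{\pi}$ under $\bar{C}$. Applying this to the optimizer of the coarse problem will then yield $L_C(\mu,\nu)\le L_{\bar{C}}(\tilde{\mu},\tilde{\nu})$, and raising to the $1/p$ power gives the claim.

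\textbf{Construction (disaggregation).} Given $\tilde{\pi}$, I would define, for each pair of hypercubes $(X_k,Y_\ell)$ and each $x_i\in X_k$, $y_j\in Y_\ell$,
\begin{equation*}
    \pi_{ij} \;:=\; \tilde{\pi}_{k\ell}\,\frac{\mu(x_i)\nu(y_j)}{\mu(X_k)\,\nu(Y_\ell)},
\end{equation*}
i.e.\ I split the coarse mass $\tilde{\pi}_{k\ell}$ among the fine pairs in $X_k\times Y_\ell$ proportionally to the local product measure $\pi_\otimes$ restricted to that block. (Cells with $\mu(X_k)\nu(Y_\ell)=0$ force $\tilde{\pi}_{k\ell}=0$ by the coarse marginal constraints, and are set to $\pi_{ij}=0$; this matches the convention needed to make $\bar{C}$ well-defined.)

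\textbf{Marginals of $\pi$.} I would verify $\pi\in\Pi(\mu,\nu)$ directly. For the row sums, summing $\pi_{ij}$ over $j$ and grouping by the containing coarse cell $Y_\ell$ produces $\mu(x_i)/\mu(X_k)$ times $\sum_\ell \tilde{\pi}_{k\ell}=\tilde{\mu}(\tilde{x}_k)=\mu(X_k)$, which telescopes to $\mu(x_i)$. The column sums are symmetric. No subtleties beyond keeping the indexing straight.

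\textbf{Cost identity.} The crux is the computation
\begin{align*}
    \inner{\pi,C}
    &= \sum_{k,\ell}\,\tilde{\pi}_{k\ell}\,\frac{1}{\mu(X_k)\nu(Y_\ell)}\sum_{x\in X_k,\,y\in Y_\ell}\rho(x,y)^p\,\mu(x)\nu(y) \\
    &= \sum_{k,\ell}\,\tilde{\pi}_{k\ell}\,\bar{C}_{k\ell}
    \;=\;\inner{\tilde{\pi},\bar{C}},
\end{align*}
which is exactly why the coarse cost $\bar{C}$ is defined with the $\pi_\otimes$-weighted average in \eqref{eq: marginally weighted cost}: the disaggregation step is cost-preserving by design. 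Applying this to an optimizer $\tilde{\pi}^\star\in\argmin_{\tilde{\pi}\in\Pi(\tilde{\mu},\tilde{\nu})}\inner{\tilde{\pi},\bar{C}}$ yields a feasible $\pi^\star\in\Pi(\mu,\nu)$ with $\inner{\pi^\star,C}=L_{\bar{C}}(\tilde{\mu},\tilde{\nu})$, so $L_C(\mu,\nu)\le L_{\bar{C}}(\tilde{\mu},\tilde{\nu})$ and the Wasserstein inequality follows by monotonicity of $t\mapsto t^{1/p}$.

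\textbf{Main obstacle.} There is no real obstacle; the proof is essentially a one-line construction plus a marginal check. The only care needed is (i) handling empty coarse cells so that $\bar{C}_{k\ell}$ is never actually evaluated with a zero denominator, and (ii) being explicit that the monotonicity step at the end requires $L_C,L_{\bar{C}}\ge 0$, which holds since $C,\bar{C}\ge 0$.
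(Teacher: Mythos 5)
Your proposal is correct and follows essentially the same route as the paper: you disaggregate a coarse coupling $\tilde{\pi}$ into the fine coupling $\pi_{ij}=\tilde{\pi}_{k\ell}\,\mu(x_i)\nu(y_j)/(\mu(X_k)\nu(Y_\ell))$, which is exactly the paper's $\pi_{\tilde{\pi}}$ built from the trivial coupling $\pi_\otimes$ (\cref{lm: admissible quantization coupling}), and your marginal check and cost identity $\inner{\pi,C}=\inner{\tilde{\pi},\bar{C}}$ mirror \cref{lm: weighted cost transport identity}. Your explicit handling of empty coarse cells is a small point the paper glosses over, but the argument is the same.
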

The proof uses an auxiliary coupling transferring the mass between each pair of sub-regions $(X_i, Y_j)$ using the trivial coupling, weighted by a coarse coupling, choosing the optimal coarse coupling. A detailed proof is provided in \cref{sc: proof of weighted cost UB}.

\subsection{Min-cost lower bound}\label{sec: Min-Cost Lower Bound}
For the same coarsening, we define the locally minimal cost
\(
    C_{k\ell}^{\min} := \min_{x\in X_k,y\in Y_\ell} \rho(x,y)^p,
\)
and compute the optimal coupling for this coarse cost using a network simplex solver, yielding a lower bound
\(
    \underline{\Wass}_p^{\min} := L_{C^{\min}}(\tilde{\mu},\tilde{\nu})^\frac{1}{p}.
\)
The following theorem is proved in \cref{sec: additional proofs}.
\begin{theorem}\label{th: locally-minminal cost LB}
    The coarse optimal transport cost set by the locally minimal cost, is a lower bound of the optimal transport.
    \(
        L_{C^{\min}}(\tilde{\mu},\tilde{\nu}) \le L_C(\mu,\nu).
    \)
\end{theorem}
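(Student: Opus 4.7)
The plan is to exhibit a feasible coupling for the coarse problem whose cost under $C^{\min}$ is at most $L_C(\mu,\nu)$. Concretely, let $\pi^* \in \Pi(\vmu,\vnu)$ be an optimal coupling for $L_C(\mu,\nu)$, and define its \emph{push-forward} onto the coarse grid by
\begin{equation}
    \tilde\pi_{k\ell} := \sum_{x_i \in X_k} \sum_{y_j \in Y_\ell} \pi^*_{ij}, \qquad (k,\ell) \in [n^d]\times[n^d].
\end{equation}
This is exactly the $\code{SumPool}$ of $\pi^*$ with window and stride $\kappa$ applied separately along the source and target axes.

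The first step is to check that $\tilde\pi \in \Pi(\tilde\vmu,\tilde\vnu)$. Summing over $\ell$ commutes with summing over the $y_j$'s, so
\begin{equation}
    (\tilde\pi\,\ones)_k = \sum_{x_i \in X_k}\sum_j \pi^*_{ij} = \sum_{x_i \in X_k}\mu_i = \mu(X_k) = \tilde\mu_k,
\end{equation}
and symmetrically $(\tilde\pi^\top \ones)_\ell = \tilde\nu_\ell$. Nonnegativity is inherited from $\pi^*$.

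The second step is the cost comparison. By definition of the locally minimal cost, for every pair $(x_i,y_j) \in X_k \times Y_\ell$ we have $C^{\min}_{k\ell} \le \rho(x_i,y_j)^p = C_{ij}$. Therefore
\begin{align}
    \inner{\tilde\pi, C^{\min}}
    &= \sum_{k,\ell} C^{\min}_{k\ell} \sum_{x_i \in X_k}\sum_{y_j \in Y_\ell} \pi^*_{ij} \\
    &\le \sum_{k,\ell} \sum_{x_i \in X_k}\sum_{y_j \in Y_\ell} C_{ij}\, \pi^*_{ij}
    = \inner{\pi^*, C} = L_C(\mu,\nu).
\end{align}
Since $\tilde\pi$ is feasible for the coarse Kantorovich problem with cost $C^{\min}$, we obtain $L_{C^{\min}}(\tilde\mu,\tilde\nu) \le \inner{\tilde\pi, C^{\min}} \le L_C(\mu,\nu)$, which is the claim.

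There is no real obstacle here: once the push-forward coupling is written down, feasibility and the cost bound both follow by rearranging sums and applying the pointwise inequality $C^{\min}_{k\ell} \le C_{ij}$ on the support of $X_k \times Y_\ell$. The only thing to be careful about is keeping the two marginals separate during the $\code{SumPool}$ (pooling the rows and columns of $\pi^*$ independently with factor $\kappa$), which mirrors the construction already used in the proof of \cref{th: weighted cost UB}.
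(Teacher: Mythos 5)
Your proof is correct and takes essentially the same route as the paper: both coarsen the optimal fine-scale coupling by summing $\pi^*$ over each block $X_k \times Y_\ell$, apply the pointwise bound $C^{\min}_{k\ell} \le C_{ij}$ blockwise, and then use feasibility of the coarsened coupling to pass to the coarse minimum. The only difference is that you verify the marginal constraints of the pooled coupling explicitly, a step the paper leaves implicit.
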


\subsection{Primal upscaling upper bound}\label{sec: Upscaling upper bound}
In this approach we upscale an optimal coupling for the coarse cost $\tilde{c}_{k\ell} = \rho(\tilde{x}_k,\tilde{y}_\ell)^p$ computed using a network simplex solver $\tilde{\pi}^* = \argmin_{\tilde{\pi} \in \Pi(\tilde{\vmu}, \tilde{\vnu})}\inner{\tilde{\pi},\tilde{C}}$ back to the original problem size.
\paragraph*{Up-scaled coupling}\label{p: step 1}
A coupling matrix $\pi$ of dimensions $n^d \times n^d$ can equivalently be represented as a $2d$-tensor of the shape \( n \times n \times \dots \times n \).
We formally define operations for reshaping matrices into tensors and back.
Let \code{reshape} be a cardinality-preserving transformation from A to $\mathbf{B}$ such that $a_{ij}=b_{u_1,\dots,u_d,\,v_1,\dots,v_d}$
where $(u_1, \dots, u_d)$ is a multi-index that corresponds to a matrix row by $i = 1 + \sum_{k=1}^d u_k n^{k-1}$.
Similarly the multi-index $(v_1, \dots, v_d)$ corresponds to the column index $j$.
Reshaping the coarse optimal coupling $\tilde{\pi}^*$ into a $2d$-tensor $\tilde{\tP}^*$,
we up-scale the optimal coupling using a normalized positive-valued kernel  $\tK$, a $2d$-tensor representing a hypercube of width $\kappa$.
\begin{equation}\label{eq: coupling tensor upscale}
    \hat{\tP} := \tilde{\tP}^* \otimes \tK.
\end{equation}
By inversely reshaping the up-scaled tensor $\hat{\tP}$ into an up-scaled matrix $\hat{\pi}$, we obtain the approximate up-scaled coupling. Using a uniform kernel $\tK$ is equivalent to nearest-neighbor interpolation.
\begin{lemma}\label{lm: normalized coupling}
    Using a normalized positive-valued kernel $\tK$ ensures $\hat{\pi}$ represents a coupling. Satisfying $\hat\pi \in \R_+^{N^d\times N^d}$ and $\sum_{i=1}^{N^d}\sum_{j=1}^{N^d} \hat{\pi}_{ij} = 1$, such that $\hat{\pi} \in \Pi(\hat\pi\ones_{N^d}, \hat\pi^{\top}\ones_{N^d})$.
\end{lemma}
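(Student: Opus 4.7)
This lemma is essentially a bookkeeping check for the tensor-product construction in \eqref{eq: coupling tensor upscale}. My plan is to verify the two substantive claims---entrywise non-negativity and total mass equal to one---and then observe that the coupling-polytope membership is tautological. The key reduction is that the \code{reshape} map is cardinality-preserving: every entry of $\hat{\tP}$ is simply relabelled as an entry of $\hat\pi$, without duplication or omission, so both the sign pattern and the total sum are preserved under reshaping. It therefore suffices to establish the claims at the tensor level for $\hat{\tP} = \tilde{\tP}^* \otimes \tK$.

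For non-negativity, $\tilde\pi^* \in \Pi(\tilde\vmu,\tilde\vnu) \subset \R_+^{n^d\times n^d}$ gives $\tilde{\tP}^* \ge 0$ entrywise, and $\tK$ is non-negative by hypothesis, so every entry of $\hat{\tP}$, being a product of a value from $\tilde{\tP}^*$ and a value from $\tK$, is non-negative. For the total mass, I would use the fact that summing over a tensor product factors:
\[
    \textstyle\sum \hat{\tP} \;=\; \bigl(\sum \tilde{\tP}^*\bigr)\bigl(\sum \tK\bigr).
\]
The first factor equals $\inner{\ones,\tilde\pi^*\ones_{n^d}} = \inner{\ones,\tilde\vmu} = 1$ since $\tilde\vmu \in \Sigma_{n^d}$, and the second equals $1$ by the normalization assumption on $\tK$, giving $\sum_{i,j}\hat\pi_{ij}=1$.

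The remaining inclusion $\hat\pi \in \Pi(\hat\pi\ones_{N^d}, \hat\pi^\top\ones_{N^d})$ is immediate from the definition of $\Pi$: any $\hat\pi \in \R_+^{N^d\times N^d}$ is by construction a coupling between its own row- and column-sum vectors, and the unit-mass property ensures those marginals lie in $\Sigma_{N^d}$. I do not anticipate any real obstacle; the only point requiring genuine care is making the axis-interleaving implicit in $\tilde{\tP}^* \otimes \tK$ compatible with the inverse \code{reshape}, so that partial sums of $\hat{\tP}$ along the ``target'' half of the indices correspond exactly to the row/column sums of $\hat\pi$ and the reshape-level arguments above transport cleanly back to the matrix $\hat\pi$.
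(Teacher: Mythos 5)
Your proposal is correct and follows essentially the same route as the paper's proof: non-negativity from the entrywise product of the non-negative coarse coupling with the positive kernel, total mass one by factoring the sum over the tensor product into $\bigl(\sum\tilde{\tP}^*\bigr)\bigl(\sum\tK\bigr)=1\cdot 1$, and the tautological membership of $\hat\pi$ in the coupling polytope of its own marginals. Your added remark about checking that the axis interleaving of $\tilde{\tP}^*\otimes\tK$ is compatible with the inverse \code{reshape} is a point the paper's proof leaves implicit, but it does not change the argument.
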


\paragraph*{Iterative proportional fitting}\label{p: step 2}
The approximate up-scaled coupling is $\xi$-fitted into $\hat{\pi}_\xi = \diag(\va)\hat{\pi}\diag(\vb)$ by iterative proportional fitting using Sinkhorn's theorem , until the marginals  $\hat{\vmu}_\xi = \va \odot (\hat{\pi} \vb),\, \hat{\vnu}_\xi = \vb \odot (\hat{\pi}^\top \va)$ are converged to $\|\hat{\vmu}_\xi - \vmu\|_1 + \|\vnu - \hat{\vnu}_\xi\|_1 <  \xi$, where $\va, \vb \in \R_+^{N^d}$ are the vector scale factors.
yielding the approximation
\(
    \widehat{\Wass}_p(\mu,\nu) := \inner{\hat{\pi}_\xi, C}^{\frac{1}{p}}.
\)

\paragraph*{Upper bound}\label{p: step 3}
Using weighted total variation (\cref{eq: weighted total variation}) we define an upscaling upper bound
\begin{align}
    \overline{\Wass}_{p}(\mu,\nu)
    :=
    \widehat{\Wass}_p(\mu,\nu)
    + \Delta_{\hat{\mu}_\xi}
    + \Delta_{\hat{\nu}_\xi} ,
\end{align}

\begin{theorem}[Upscaling Upper Bound]\label{th: Upscaling upper bound}
    $\overline{\Wass}_p(\mu,\nu)$ is upper bound of the Wasserstein distance,
    \begin{align}
        \Wass_p(\mu,\nu) \le \overline{\Wass}_p(\mu,\nu).
    \end{align}
\end{theorem}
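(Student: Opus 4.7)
The plan is to derive the claim directly from Lemma~\ref{lm: triangle inequality upper bound} once I verify that the upscaled and IPF-corrected object $\hat{\pi}_\xi$ is genuinely a coupling between two elements of the probability simplex $\Sigma_{N^d}$. If that holds, the lemma applies verbatim with $\hat{\mu}\leftarrow\hat{\mu}_\xi$, $\hat{\nu}\leftarrow\hat{\nu}_\xi$, $\hat{\pi}\leftarrow\hat{\pi}_\xi$ and yields exactly the definition of $\overline{\Wass}_p(\mu,\nu)$ on the right-hand side.

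First, I would invoke Lemma~\ref{lm: normalized coupling} to conclude that the initial upscaled coupling $\hat{\pi}$ constructed in \eqref{eq: coupling tensor upscale} is a non-negative matrix of total mass one, and in particular belongs to $\Pi(\hat{\pi}\ones_{N^d}, \hat{\pi}^\top\ones_{N^d})$. Non-negativity is immediate from the tensorization of the non-negative coarse coupling $\tilde{\tP}^*$ against the normalized positive kernel $\tK$, and the total mass is preserved since $\tK$ sums to one.

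Next, I would observe that the IPF step $\hat{\pi}_\xi = \diag(\va)\hat{\pi}\diag(\vb)$ with scalings $\va,\vb\in\R_+^{N^d}$ preserves non-negativity and, upon termination of a row- or column-rescaling half-step, matches one marginal exactly and therefore has total mass one. Consequently both marginals $\hat{\vmu}_\xi = \va\odot(\hat{\pi}\vb)$ and $\hat{\vnu}_\xi = \vb\odot(\hat{\pi}^\top\va)$ lie in $\Sigma_{N^d}$, and by construction $\hat{\pi}_\xi \in \Pi(\hat{\vmu}_\xi, \hat{\vnu}_\xi)$. Applying Lemma~\ref{lm: triangle inequality upper bound} then gives
\[
\Wass_p(\mu,\nu) \;\le\; \inner{\hat{\pi}_\xi, C}^{\frac{1}{p}} + \Delta_{\hat{\mu}_\xi} + \Delta_{\hat{\nu}_\xi} \;=\; \overline{\Wass}_p(\mu,\nu).
\]

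The one place where a sloppy argument could fail is the normalization check in the IPF step: the stopping criterion is stated as an $\ell^1$ bound $\|\hat{\vmu}_\xi-\vmu\|_1+\|\vnu-\hat{\vnu}_\xi\|_1<\xi$ rather than as a probability-simplex membership condition, so I would need to be explicit that IPF is stopped on a half-step where one marginal is matched exactly (which is the standard convention), so that total mass is preserved and both marginals do lie in $\Sigma_{N^d}$. Once this bookkeeping is in place, the theorem reduces to a one-line application of the already-proved triangle-inequality lemma, and no further analytic work is required.
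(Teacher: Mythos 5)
Your proposal is correct and follows essentially the same route as the paper's own proof: establish that $\hat{\pi}$ is a normalized coupling via Lemma~\ref{lm: normalized coupling}, argue that the proportional-fitting step preserves this, and conclude by Lemma~\ref{lm: triangle inequality upper bound}. Your explicit bookkeeping about stopping IPF on a half-step so that one marginal is matched exactly (hence total mass is preserved and both marginals lie in $\Sigma_{N^d}$) is a point the paper dispatches with a bare appeal to Sinkhorn's theorem, so your version is if anything slightly more careful.
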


\begin{proof}[Proof of \cref{th: Upscaling upper bound}]
    The up-scaled matrix $\hat{\pi} \in \Pi(\hat\pi\ones_{N^d}, \hat\pi^{\top}\ones_{N^d})$ is normalized as a coupling, by \cref{lm: normalized coupling}, and $\hat{\pi}_\xi$ retains this normalization, by Sinkhorn's theorem.  $\overline{\Wass}_p(\mu,\nu)$ is than an upper bound of the Wasserstein distance, by \cref{lm: triangle inequality upper bound}.
\end{proof}

\begin{remark}
    Considering $\calX = [N]^d$ with $L^2$ norm, the radius becomes $r=\frac{1}{2}d^{\frac{1}{2}}N$ (see \cref{th: $L^1$ upper bound on weighted total variation}) such that the weighted total variation correction $\Delta_{\hat\mu} + \Delta_{\hat\nu}$ is at most $2^{1-\frac{2}{p}}d^{\frac{1}{2}}N\xi^{\frac{1}{p}}$. Negligible for $\xi \ll N^{-p}$, and can be ignored for many practical use cases.
\end{remark}

\subsection{Dual upscaling lower bound}\label{sec: Upscaling Lower Bound}
We construct a lower bound for the Wasserstein distance $\Wass_p(\mu,\nu)$ by solving a down-scaled optimal transport problem using coarsened measures.
The coarse optimal Kantorovich potentials are than up-scaled using a multi-linear interpolation and improved using a c-transform.
Considering the same setting as in \cref{sec: Upscaling upper bound}, we solve for the optimal potentials of the down-scaled discrete measures
\begin{align}
    (\tilde{\vf}^*,\tilde{\vg}^*) = \argmax_{(\tilde{\vf},\tilde{\vg}) \in \mathcal{R}(\tilde{C})} \inner{\tilde{\vf}, \tilde\vmu} + \inner{\tilde{\vg}, \tilde\vnu}
\end{align}
and evaluate the dual transport cost at the original scale by up-scaling the optimal potentials. Up-scaling can be performed by any multivariate interpolation method such as nearest-neighbor, spline, multi-linear and polynomial methods.
Using an interpolation function
$R: \calX \cup \tilde{\calX} \rightarrow \R$,
the up-scaled potential
$\hat{\vf}$ is defined by
\begin{align}
    \hat{\vf} := \{R_{\tilde{\vf}, \tilde{\calX}}(x_i)\}_{i\in[N^d]}.
\end{align}
An important property of the dual formulation is that for every potential $\vf \in \R^n$ we can easily find a \emph{tight} potential $\vf^\c \in \R^m$ such that $(\vf,\vf^\c) \in \mathcal{R}(C)$, by
\(
    f_j^\c := \min_i C_{ij}-f_i \,.
\)
This is known as a \emph{c-transform}. It can be shown that repeating this process once more achieves a tight pair
$(\vf^\c, \vf^{\c\c}) \in \mathcal{R}(C)$, where
$f_i^{\c\c} := \min_j C_{ij}-f_j^\c$.
Thus, a lower bound is guaranteed by using the c-transform to generate the potential pair from the up-scaled potential
$\vf = \hat{\vf}^{\c\c}$ and $\vg = \hat{\vf}^{\c}$,
which yields the upscaling lower bound
\begin{equation}\label{eq: upscaling lower bound}
    \underline{\Wass}_p(\mu, \nu) := \left(\inner{\vf,\vmu} + \inner{\vg,\vnu} \right)^{\frac{1}{p}}.
\end{equation}

Since by the admissibility of a c-transformed pair $(\vf, \vg) \in \mathcal{R}(C)$, we can write
\begin{proposition}
    Considering an approximate potential $\hat{\vf} \in \R^{N^d}$. For $\vf = \hat{\vf}^{\c\c}$ and $\vg = \hat{\vf}^{\c}$
    \begin{align}
        \inner{\vf,\vmu} + \inner{\vg,\vnu} \le L_C(\mu, \nu).
    \end{align}
\end{proposition}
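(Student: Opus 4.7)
The plan is straightforward: show that $(\vf,\vg) = (\hat\vf^{\c\c}, \hat\vf^\c)$ is an admissible dual pair, i.e.\ $(\vf,\vg) \in \mathcal{R}(C)$, and then invoke the dual formulation of the Kantorovich problem stated earlier in the background section. Since $L_C(\mu,\nu)$ is the \emph{maximum} of $\inner{\vf,\vmu}+\inner{\vg,\vnu}$ over all $(\vf,\vg)\in\mathcal{R}(C)$, admissibility of any particular pair immediately yields the claimed inequality.

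The key sub-step is to verify that a single c-transform always produces an admissible pair. Given any $\vf\in\R^n$, I would unfold the definition $f_j^\c := \min_i (C_{ij} - f_i)$: for every fixed index $i^*$, the minimum is no larger than the $i^*$-th summand, so $f_j^\c \le C_{i^*j} - f_{i^*}$, hence $f_{i^*} + f_j^\c \le C_{i^*j}$. Since $i^*$ and $j$ were arbitrary, $(\vf, \vf^\c)\in\mathcal{R}(C)$.

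Now I would apply this observation twice. Starting from $\hat\vf\in\R^{N^d}$, set $\vg := \hat\vf^\c$ defined by $g_j = \min_i (C_{ij} - \hat f_i)$; the first application gives $(\hat\vf,\vg)\in\mathcal{R}(C)$. Next, treating $\vg$ as the starting potential on the target side and applying the c-transform with respect to the first coordinate, set $\vf := \hat\vf^{\c\c}$ defined by $f_i = \min_j (C_{ij} - g_j)$. The same argument (with the roles of rows and columns swapped) yields $f_i + g_j \le C_{ij}$ for all $i,j$, so $(\vf,\vg)=(\hat\vf^{\c\c}, \hat\vf^\c)\in\mathcal{R}(C)$.

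Finally, combining admissibility with the dual representation
\(
L_C(\mu,\nu) = \max_{(\vf',\vg')\in\mathcal{R}(C)} \inner{\vf',\vmu}+\inner{\vg',\vnu}
\)
gives $\inner{\vf,\vmu}+\inner{\vg,\vnu}\le L_C(\mu,\nu)$, completing the proof. There is no real obstacle here: the argument is a direct consequence of the definition of the c-transform and the strong duality of the discrete Kantorovich linear program, both of which are cited earlier in the paper. The only thing worth being careful about is keeping the row/column roles of the two successive c-transforms straight.
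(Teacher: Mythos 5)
Your proof is correct and follows essentially the same route as the paper: establish that the c-transformed pair $(\hat\vf^{\c\c},\hat\vf^{\c})$ lies in $\mathcal{R}(C)$ and then apply the dual (maximization) formulation of $L_C(\mu,\nu)$. The paper merely asserts admissibility of the c-transformed pair as a known property, whereas you spell out the one-line verification from the definition of the minimum; this is a welcome but not substantively different elaboration.
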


\section{Computational complexity analysis}\label{sec:complexity}
The quantization-based bounds involve the following steps: computing the Wasserstein metric on the quantized measures, upscaling the dual potentials or couplings to the original scale, and the calculation of weighted total variation correction terms.
The latter is calculated in linear time and space, thus negligible w.r.t. the other steps. In the following, we detail the computational gains provided by the proposed methods.

\paragraph{Downscaled optimal transport}
The solution to the Kantorovich problem of the scaled measures can be solved by dedicated linear programming methods, such as the network  simplex used in \citet{bonneelDisplacementInterpolationUsing2011}, with $O\left(n^{3d}\log{n}\right)$  time complexity \citep{ahujaNetworkFlowsTheory1993}. By solving only for the optimal transport of the coarse measures, we produce a computational speedup of \(\Theta(\kappa^{3d})\) (up to log factors).
The space complexity can also be significantly reduced, since one can avoid storing the full cost matrix of size $N^d \times N^d$, by using coarse cost matrices, e.g. $\bar{C},C^{\min}$, and $\tilde{C}$ of size $n^d\times n^d$, realizing a memory gain of $\Theta(\kappa^{2d})$.

\paragraph{Upscaled optimal coupling}
The optimal coarse coupling $\tilde{\pi}^*$ is a sparse matrix with at most $2n^d - 1$ positive entries \citep[Proposition~3.4]{peyreComputationalOptimalTransport2019}. Thus, the up-scaled approximate coupling from \cref{eq: coupling tensor upscale} conserves this sparsity with $\#\supp{\hat{\pi}} \le \kappa^{2d}(2n^d - 1)$, allowing to calculate the approximate optimal transport
\begin{equation}
    \inner{\hat{\pi}_\xi, C} = \sum_{(i, j) \in \supp{\hat{\pi}}}  \overbrace{a_i\hat{\pi}_{ij}b_j}^{(\hat{\pi}_\xi)_{ij}} \, \rho(x_i, y_j)^p.
\end{equation}
without impacting the total time and space complexity of the coarse optimal transport solution.
\begin{equation}
    \frac{\#{C}}{\#\supp{\hat{\pi}}} = \frac{N^{2d}}{\kappa^{2d}(2n^d - 1)}
    =
    \Theta\big( \left( N/\kappa \right)^d \big).
\end{equation}

\paragraph{Lazy c-transform}
To reduce the memory requirements of \cref{eq: upscaling lower bound}, we evaluate the c-transform on-demand (i.e. "lazy") without storing the entire cost matrix $C$.
\begin{align}\label{eq: lazy c-transform}
    g_j & \leftarrow \min_{i} \rho(x_i,y_j)^p - \hat{f}_i \\
    f_i & \leftarrow \min_{j} \rho(x_i,y_j)^p - g_j
\end{align}

\begin{table}[h]
    \begin{tabular}{lcc}
        \toprule
        \textbf{Method} & \textbf{Time Complexity} & \textbf{Space Complexity} \\
        \midrule
        Entropic Regularization-Based Bounds \citep{LinHoJordan2022} & $\tilde{O}\left((n \kappa)^{2d}/\varepsilon^2\right)$& $O\left((n \kappa)^{2d}\right)$\\
        Quantization-Based Bounds & $\tilde{O}\left(n^{3d} \right)$ & $O\left(n^{2d}\right)$ \\
        \bottomrule
    \end{tabular}
\caption{Complexity of different bounds in terms of the fine-scale cardinality $\#\calX=N^d=(n \kappa)^d$. The first row corresponds to the methods in \cref{sec: entropy regulariztion bounds} and the second to \crefrange{sec: Weighted-Cost Upper Bound}{sec: Upscaling Lower Bound}.}\label{tbl: complexity}\vspace{-2pt}
\end{table}

\section{Experiments}\label{sec:experiments}
The algorithms were implemented in Python and optimized for GPU acceleration using JAX numerical computation library \citep{jax2018github}. For entropy regularized optimal transport we used OTT-JAX \citep{CuturiMeng-PapaxanthosTianBunneDavisEtAl2022} and POT \citep{flamaryPOTPythonOptimal2021} for exact optimal transport. The methods were benchmarked on a machine using an NVIDIA L40 GPU and AMD EPYC 9654 CPU.

\paragraph{DOTmark}
The methods were evaluated on 2D images from the discrete optimal transport  benchmark \citep{schrieberDOTmarkBenchmarkDiscrete2017}, using $\rho=L^2$ the euclidean metric, at $p=\{1,2\}$. To examine the effect of the scaling factor, the quantization-based methods were evaluated using $\kappa=\{2,4\}$. To examine the effect of the entropic-regularization parameter, the regularization-based methods were evaluated using $\varepsilon=\{0.001N^p,0.004N^p\}$ explicitly dependent on $N^p$ term to avoid large $\|C\|_\infty / \varepsilon$ causing numerical instability \citep{AltschulerWeedRigollet2018}, since $\|C\|_\infty \propto N^p$ in our setting.
For upper bounds, while at $p=1$ the entropic-regularization upper bound at $\varepsilon=0.001N^p$ delivers the best approximation, summarized in \cref{tbl: accuracy}, it does so with significant impact on the computation time, as seen in \cref{fig: efficiency of wasserstein upper bounds}. Otherwise, both for upper and lower bound the quantization methods produce the best approximations at $\kappa=2$, while computing with relative time second only to the quantization methods scaled at $\kappa=4$.

\begin{table}[h]
    \centering
    \small
    \setlength{\tabcolsep}{2pt}
    \begin{tabular}{lccccccc|cccccc}
    \toprule
     & & \multicolumn{6}{c}{Upper Bounds} & \multicolumn{6}{c}{Lower Bounds} \\
    \cmidrule{3-8}\cmidrule{9-14}
     & & \multicolumn{2}{c}{\shortstack{Weighted-\\Cost}} & \multicolumn{2}{c}{\shortstack{Primal\\Upscaling}} & \multicolumn{2}{c}{\shortstack{Entropic\\Regularization}} & \multicolumn{2}{c}{\shortstack{Dual\\Upscaling}} & \multicolumn{2}{c}{\shortstack{Min-\\Cost}} & \multicolumn{2}{c}{\shortstack{Entropic\\Regularization}} \\
     & & $\kappa_2$ & $\kappa_4$ & $\kappa_2$ & $\kappa_4$ & $\varepsilon_1$ & $\varepsilon_4$ & $\kappa_2$ & $\kappa_4$ & $\kappa_2$ & $\kappa_4$ & $\varepsilon_1$ & $\varepsilon_4$ \\
    Class & p &  &  &  &  &  &  &  &  &  &  &  &  \\
    \midrule
    \multirow[t]{4}{*}{Classic} & \multirow[t]{2}{*}{1} & 3.1\% & 11.0\% & 9.6\% & 23.0\% & \textbf{0.9\%} & 5.2\% & \textbf{0.3\%} & 0.7\% & 10.0\% & 27.0\% & 24.0\% & 88.0\% \\ Images
     &  & {\scriptsize $\pm$ 2.0\%} & {\scriptsize $\pm$ 7.1\%} & {\scriptsize $\pm$ 4.1\%} & {\scriptsize $\pm$ 9.9\%} & {\scriptsize $\pm$ 0.5\%} & {\scriptsize $\pm$ 2.1\%} & {\scriptsize $\pm$ 0.2\%} & {\scriptsize $\pm$ 0.4\%} & {\scriptsize $\pm$ 6.4\%} & {\scriptsize $\pm$ 16.0\%} & {\scriptsize $\pm$ 8.3\%} & {\scriptsize $\pm$ 15.0\%} \\
    \cline{2-14}\noalign{\vskip 1.5pt}
     & \multirow[t]{2}{*}{2} & \textbf{1.6\%} & 7.9\% & 2.2\% & 8.8\% & 14.0\% & 44.0\% & \textbf{0.7\%} & 2.4\% & 13.0\% & 33.0\% & 98.0\% & 100.0\% \\
     &  & {\scriptsize $\pm$ 1.2\%} & {\scriptsize $\pm$ 5.2\%} & {\scriptsize $\pm$ 1.4\%} & {\scriptsize $\pm$ 5.5\%} & {\scriptsize $\pm$ 8.8\%} & {\scriptsize $\pm$ 25.0\%} & {\scriptsize $\pm$ 0.5\%} & {\scriptsize $\pm$ 1.6\%} & {\scriptsize $\pm$ 3.7\%} & {\scriptsize $\pm$ 8.8\%} & {\scriptsize $\pm$ 8.4\%} & {\scriptsize $\pm$ 0.0\%} \\
    \midrule
    \multirow[t]{4}{*}{Micro-} & \multirow[t]{2}{*}{1} & 0.9\% & 3.4\% & 2.4\% & 6.5\% & \textbf{0.4\%} & 2.0\% & \textbf{0.4\%} & 0.9\% & 6.2\% & 17.0\% & 9.6\% & 38.0\% \\ scopy
     &  & {\scriptsize $\pm$ 1.7\%} & {\scriptsize $\pm$ 5.9\%} & {\scriptsize $\pm$ 3.2\%} & {\scriptsize $\pm$ 8.4\%} & {\scriptsize $\pm$ 0.3\%} & {\scriptsize $\pm$ 2.0\%} & {\scriptsize $\pm$ 0.5\%} & {\scriptsize $\pm$ 0.9\%} & {\scriptsize $\pm$ 4.2\%} & {\scriptsize $\pm$ 10.0\%} & {\scriptsize $\pm$ 7.1\%} & {\scriptsize $\pm$ 22.0\%} \\
    \cline{2-14}\noalign{\vskip 1.5pt}
     & \multirow[t]{2}{*}{2} & \textbf{0.5\%} & 2.2\% & 0.7\% & 2.7\% & 3.8\% & 11.0\% & \textbf{0.2\%} & 0.7\% & 5.5\% & 16.0\% & 30.0\% & 90.0\% \\
     &  & {\scriptsize $\pm$ 0.7\%} & {\scriptsize $\pm$ 3.5\%} & {\scriptsize $\pm$ 1.0\%} & {\scriptsize $\pm$ 3.9\%} & {\scriptsize $\pm$ 5.9\%} & {\scriptsize $\pm$ 18.0\%} & {\scriptsize $\pm$ 0.4\%} & {\scriptsize $\pm$ 1.2\%} & {\scriptsize $\pm$ 3.4\%} & {\scriptsize $\pm$ 8.4\%} & {\scriptsize $\pm$ 32.0\%} & {\scriptsize $\pm$ 19.0\%} \\
    \midrule
    \multirow[t]{4}{*}{Shapes} & \multirow[t]{2}{*}{1} & 1.1\% & 3.6\% & 3.2\% & 7.8\% & \textbf{0.7\%} & 2.6\% & \textbf{0.5\%} & 1.0\% & 7.3\% & 20.0\% & 13.0\% & 51.0\% \\
     &  & {\scriptsize $\pm$ 2.2\%} & {\scriptsize $\pm$ 4.8\%} & {\scriptsize $\pm$ 2.6\%} & {\scriptsize $\pm$ 6.3\%} & {\scriptsize $\pm$ 2.5\%} & {\scriptsize $\pm$ 2.2\%} & {\scriptsize $\pm$ 2.8\%} & {\scriptsize $\pm$ 3.0\%} & {\scriptsize $\pm$ 5.2\%} & {\scriptsize $\pm$ 11.0\%} & {\scriptsize $\pm$ 8.1\%} & {\scriptsize $\pm$ 22.0\%} \\
    \cline{2-14}\noalign{\vskip 1.5pt}
     & \multirow[t]{2}{*}{2} & \textbf{1.2\%} & 3.2\% & 1.4\% & 3.6\% & 5.1\% & 15.0\% & \textbf{0.9\%} & 1.7\% & 7.7\% & 20.0\% & 52.0\% & 99.0\% \\
     &  & {\scriptsize $\pm$ 4.5\%} & {\scriptsize $\pm$ 5.0\%} & {\scriptsize $\pm$ 4.5\%} & {\scriptsize $\pm$ 5.0\%} & {\scriptsize $\pm$ 6.2\%} & {\scriptsize $\pm$ 17.0\%} & {\scriptsize $\pm$ 5.1\%} & {\scriptsize $\pm$ 5.6\%} & {\scriptsize $\pm$ 6.1\%} & {\scriptsize $\pm$ 9.2\%} & {\scriptsize $\pm$ 34.0\%} & {\scriptsize $\pm$ 9.6\%} \\
    \bottomrule
    \end{tabular}
    \caption{Accuracy comparison of different methods showing the mean $\pm$ standard deviation of the relative error computed across all the pairwise distances in the DOTmark class at $128\times128$ resolution. Each method is evaluated at different fidelity level $\kappa_2=2$ and $\kappa_4=4$ and different values of $\varepsilon_1 = 1\cdot 10^{-3}N^p$ and $\varepsilon_4=4\cdot 10^{-3}N^p$. }
    \label{tbl: accuracy}
\end{table}

\begin{figure}
  \centering
  \begin{subfigure}
    \centering
    \includegraphics[width=\linewidth]{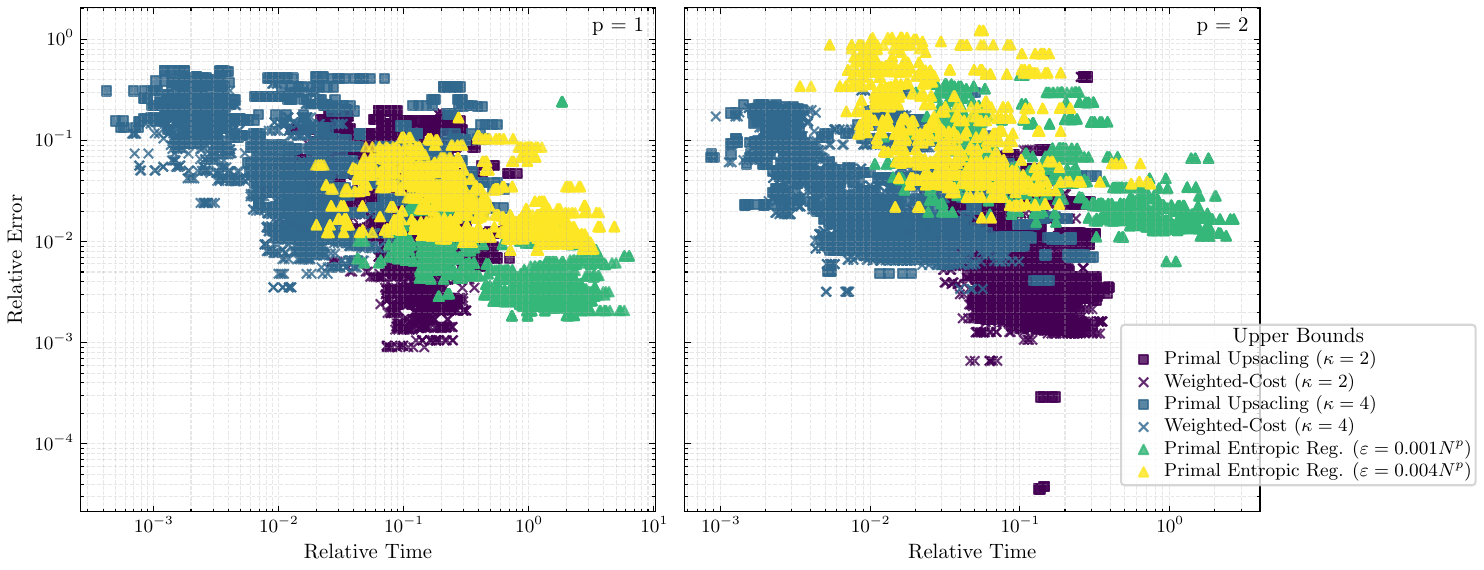}
    \vspace{-2em}
    \caption{Efficiency of Wasserstein Upper Bounds}
    \label{fig: efficiency of wasserstein upper bounds}
  \end{subfigure}
  \vspace{1em}
  \begin{subfigure}
    \centering
    \includegraphics[width=\linewidth]{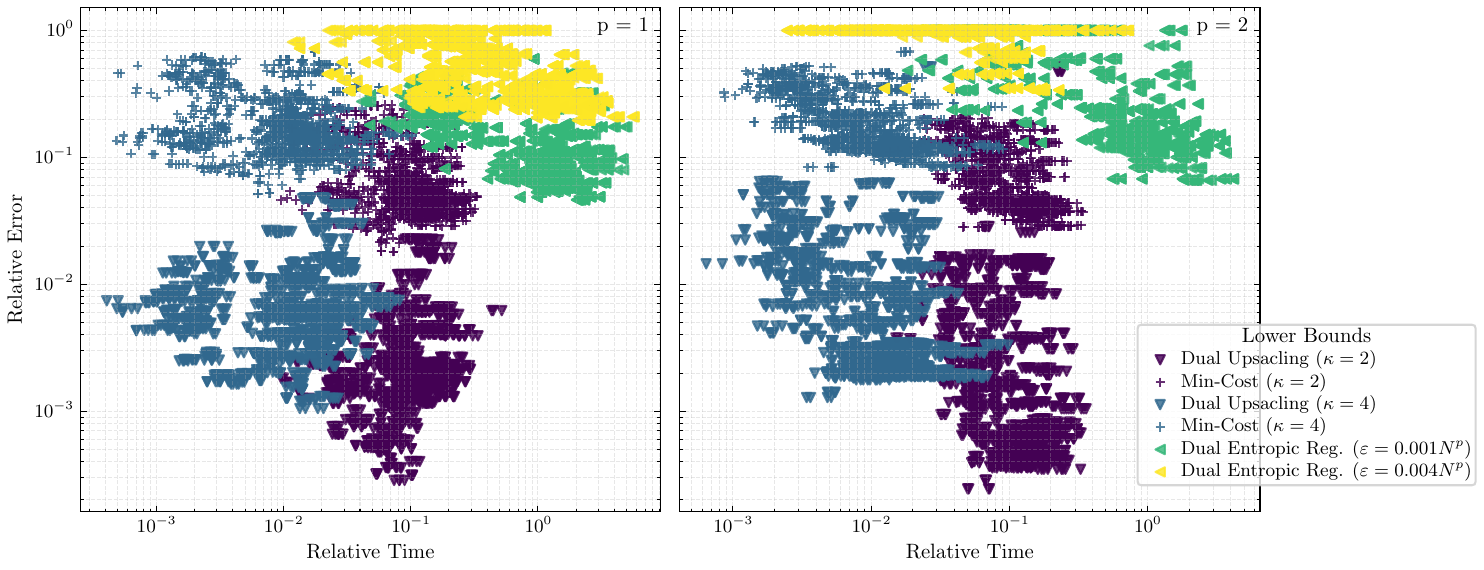}
    \vspace{-2em}
    \caption{Efficiency of Wasserstein Lower Bounds}
    \label{fig: efficiency of wasserstein bounds}
  \end{subfigure}
  \label{fig: efficiency of wasserstein lower bounds}
\end{figure}

\paragraph{EMDB}
In the field of structural biology, approximations of the Wasserstein metric are increasingly being used on 2D projection images and 3D volumetric reconstructions of proteins and other macromolecules.
Specific applications include molecular alignment, clustering and dimensionality reduction,
with most methods substituting the Wasserstein metric with a crude approximation that is fast to compute
 \citep{RaoMoscovichSinger2020,RiahiWoollardPoitevinCondonDuc2023,singerAlignmentDensityMaps2024,KileelMoscovichZeleskoSinger2021}.
 We evaluate our algorithms in a challenging 3D alignment setting, where we wish to compute the Wasserstein-$p$ metric $p \in \{1,2\}$ between rotated 3D density maps of the same molecule.
The volumetric density maps are downloaded from the Electron Microscopy Data Bank (EMDB) \citep{ThewwPDBConsortium2024} using the ASPIRE package
\citep{WrightAndenBansalXiaLangfieldEtAl2025}.
\cref{fig:cryo-em-rotations} shows the computed bounds for rotations between $0^\circ$ and $180^\circ$ of the Plasmodium falciparum 80S ribosome 3D density map \citep{WongBaiBrownFernandezHanssenEtAl2014}.

Plots for other molecules are shown in the supplementary material.
Dual upscaling (lower bound) and weighted-cost (upper bound) methods at $\kappa=2$ provide the best approximations. A summary of the computational speed-up is provided in \cref{tbl: time 3d}.
\begin{figure}[h]
  \hspace{-6.5em}
  \includegraphics[width=0.58\linewidth]{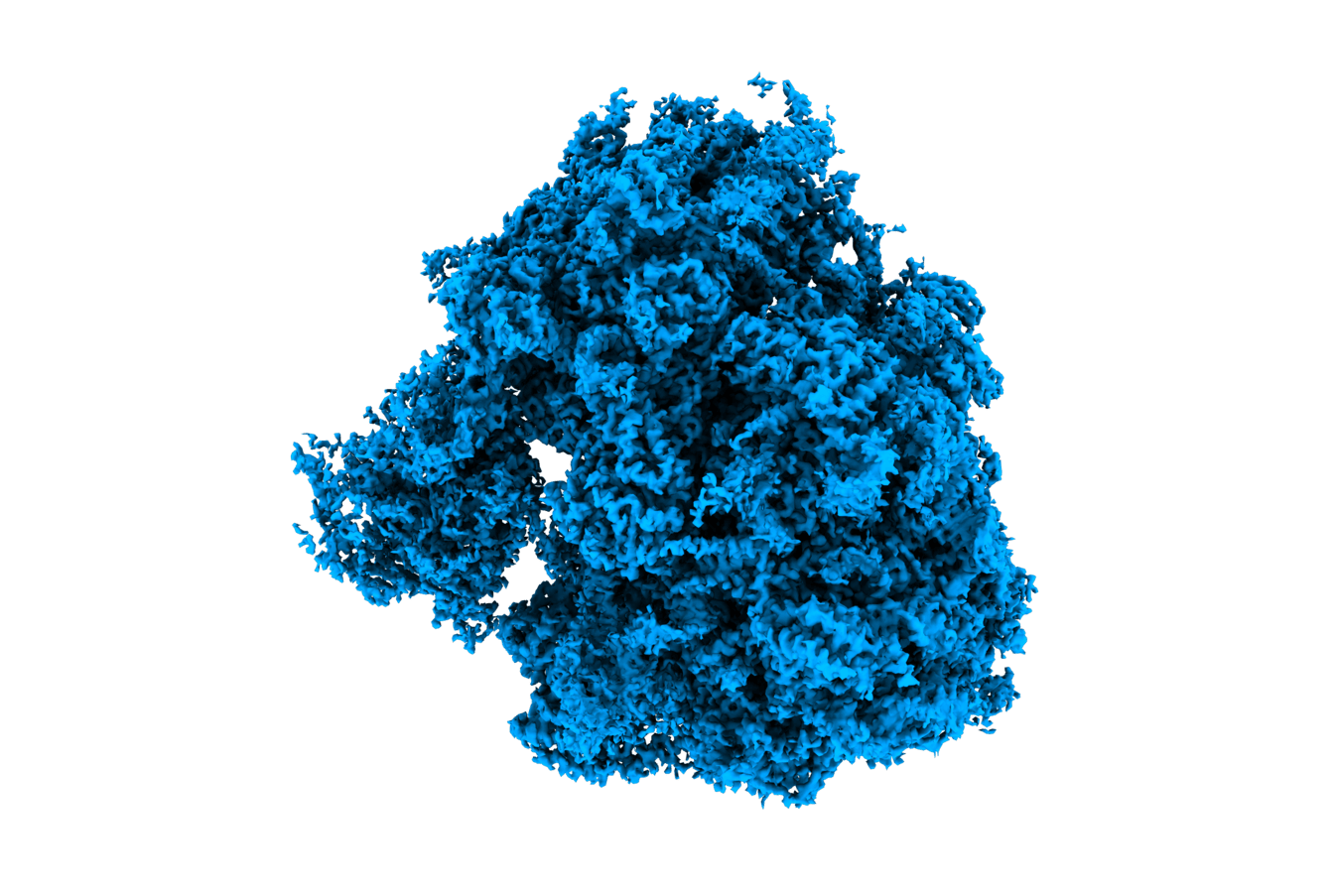}
  \hspace{-5em}
  \includegraphics[width=0.75\linewidth]{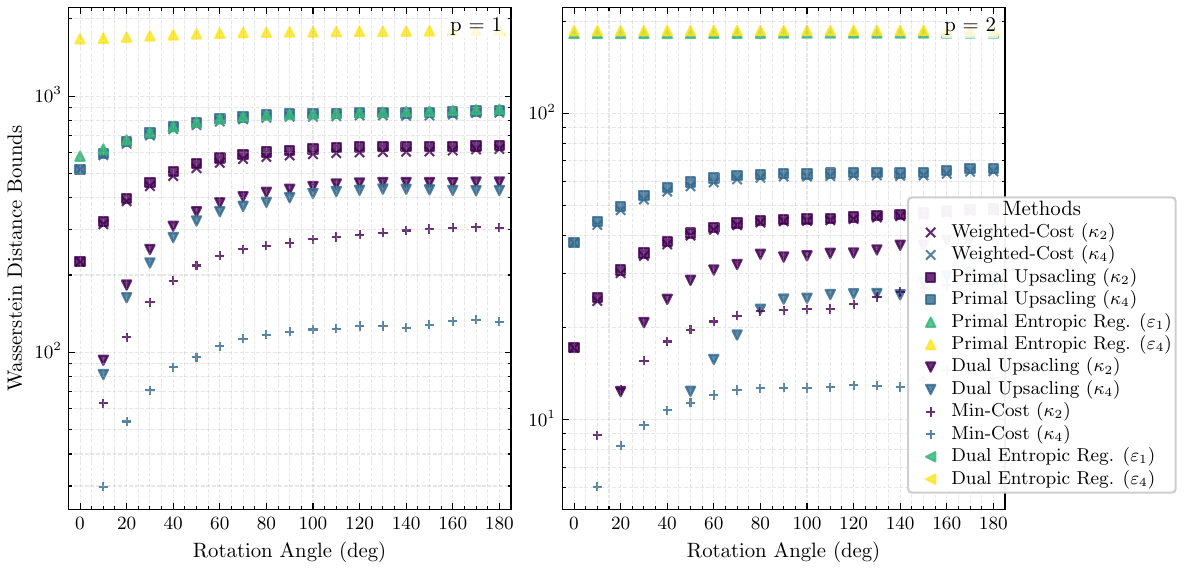}
  \caption{
  Wasserstein distance bounds between rotated 3D density maps of the 80S ribosome.
  The left panel shows an isosurface plot of the 3D density map that we rotated around the $z$-axis.
  The other two panels compare the different algorithms for producing upper and lower bounds on the Wasserstein-$p$ metric. (center) $p=1$; (right) $p=2$.
  }
  \label{fig:cryo-em-rotations}
\end{figure}

\begin{table}[ht]
    \centering
    \small
    \setlength{\tabcolsep}{2pt}
    \begin{tabular}{lcccccc|cccccc}
    \toprule
     & \multicolumn{6}{c}{Upper Bounds} & \multicolumn{6}{c}{Lower Bounds} \\
    \cmidrule{2-7}\cmidrule{8-13}
     & \multicolumn{2}{c}{\shortstack{Weighted-\\Cost}} & \multicolumn{2}{c}{\shortstack{Primal\\Upscaling}} & \multicolumn{2}{c}{\shortstack{Entropic\\Regularization}} & \multicolumn{2}{c}{\shortstack{Dual\\Upscaling}} & \multicolumn{2}{c}{\shortstack{Min-\\Cost}} & \multicolumn{2}{c}{\shortstack{Entropic\\Regularization}} \\
    p & $\kappa_2$ & $\kappa_4$ & $\kappa_2$ & $\kappa_4$ & $\varepsilon_1$ & $\varepsilon_4$ & $\kappa_2$ & $\kappa_4$ & $\kappa_2$ & $\kappa_4$ & $\varepsilon_1$ & $\varepsilon_4$ \\
    \midrule
    \multirow[t]{2}{*}{1} & 0.22\% & \textbf{0.12\%} & 1.09\% & 27.23\% & 130.95\% & 144.97\% & 0.55\% & 0.33\% & 0.20\% & \textbf{0.11\%} & 190.88\% & 197.35\% \\
     & {\scriptsize $\pm$ 0.06\%} & {\scriptsize $\pm$ 0.26\%} & {\scriptsize $\pm$ 0.34\%} & {\scriptsize $\pm$ 15.99\%} & {\scriptsize $\pm$ 29.46\%} & {\scriptsize $\pm$ 39.00\%} & {\scriptsize $\pm$ 0.21\%} & {\scriptsize $\pm$ 0.21\%} & {\scriptsize $\pm$ 0.06\%} & {\scriptsize $\pm$ 0.29\%} & {\scriptsize $\pm$ 58.51\%} & {\scriptsize $\pm$ 52.12\%} \\
    \cline{1-13}\noalign{\vskip 1.5pt}
    \multirow[t]{2}{*}{2} & 0.22\% & \textbf{0.06\%} & 0.30\% & 0.16\% & 164.64\% & 171.42\% & 0.19\% & \textbf{0.04\%} & 0.21\% & \textbf{0.04\%} & 195.49\% & 197.76\% \\
     & {\scriptsize $\pm$ 0.04\%} & {\scriptsize $\pm$ 0.02\%} & {\scriptsize $\pm$ 0.07\%} & {\scriptsize $\pm$ 0.08\%} & {\scriptsize $\pm$ 42.18\%} & {\scriptsize $\pm$ 41.89\%} & {\scriptsize $\pm$ 0.05\%} & {\scriptsize $\pm$ 0.02\%} & {\scriptsize $\pm$ 0.05\%} & {\scriptsize $\pm$ 0.02\%} & {\scriptsize $\pm$ 42.43\%} & {\scriptsize $\pm$ 44.34\%} \\
    \bottomrule
    \end{tabular}
    \caption{Computation time relative to the exact Wasserstein distance computation for 3D Cryo-EM data at $32\times32\times32$ resolution. Results show mean $\pm$ standard deviation across rotations.}\label{tbl: time 3d}
\end{table}

\section{Conclusion and discussion}\label{sec:conclusion}
In this paper, we proposed several methods for computing fast approximations that lower or upper-bound the Wasserstein metric between discrete distributions on a regular grid.
Our methods are based on the solution of lower-resolution OT problems that are then upsampled and corrected to yield upper and lower bounds to the original problem.
Our experiments on 2D images and 3D volumetric data demonstrate significant improvements in
computational efficiency and accuracy compared to bounds based on entropic OT.
Despite the considerable computational speedups resulting from our methods compared to the only apparent alternative for exact Wasserstein bounds, the methods are still much slower compared to many almost linear time state-of-the-art approximation methods practitioners use in in practice for such datasets.
Looking forward, our approach could be refined and extended by exploring different interpolation methods for the upscaling stage and multi-scale approaches.
Finally, despite the paper's focus, the methods could also be extended to domains beyond regular grids such as point clouds in $\mathbb{R}^n$ and graphs.

\FloatBarrier
\bibliography{optimal-transport-approximation}
\bibliographystyle{abbrvnat}

\newpage
\appendix
\section{Algorithms}\label{sec: algorithms}
Lower-bound based on entropic regularization described in \cref{sec: reg lower bound}.
This method simply runs the iterative Sinkhorn algorithm an then returns the unregularized cost term of the dual solution to the entropic regularization problem.
\begin{algorithm}
    \caption{Regularization-Based Lower Bound}
    \label{alg: regularization lower bound}
    \begin{algorithmic}
        \REQUIRE $\vmu,\vnu \in \Sigma_{N^d}$ on $\calX = [N]^d$, $p \ge 1, \varepsilon > 0, \xi > 0$ and metric $\rho$.
        \STATE $C \gets \{\rho(x_i,y_j)^p\}_{ij}$
        \STATE $K \gets \exp\left( -\frac{C}{\varepsilon} \right)$
        \STATE Initialize $\vf \gets \zeros_n$, $\vg \gets \zeros_m$
        \STATE Initialize $\vb \gets \exp\left( \frac{\vg}{\varepsilon} \right)$
        \REPEAT
        \STATE $\va \gets \vmu \oslash (K \vb)$
        \STATE $\vf \gets \varepsilon \log \va$
        \STATE $\vb \gets \vnu \oslash (K^\top \va)$
        \STATE $\vg \gets \varepsilon \log \vb$
        \UNTIL{$\|\va \odot K\vb - \vmu\|_1 + \|\vnu - \vb \odot K^\top \va\|_1 < \xi$}
        \RETURN $\left(\inner{\vf,\vmu} + \inner{\vg,\vnu}\right)^\frac{1}{p}$
    \end{algorithmic}
\end{algorithm}

Algorithm for entropic regularization-based upper bound described in \cref{sec: reg upper bound}, using the unregularized term of the primal solution to the entropic regularization problem, with weighted total variation marginal corrections.
\begin{algorithm}
    \caption{Regularization-Based Upper Bound}
    \label{alg: regularization upper bound}
    \begin{algorithmic}
        \REQUIRE $\vmu,\vnu \in \Sigma_{N^d}$ on $\calX = [N]^d$, $p \ge 1, \varepsilon > 0, \xi > 0$ and metric $\rho$.
        \STATE $C \gets \{\rho(x_i,y_j)^p\}_{ij}$
        \STATE $K \gets \exp\left( -\frac{C}{\varepsilon} \right)$
        \STATE Initialize $\vb \gets \ones_{N^d}$
        \REPEAT
        \STATE $\va \gets \vmu \oslash K\vb$
        \STATE $\vb \gets \vnu \oslash K^\top \va$
        \STATE $\hat{\vmu} \gets \va \odot (K \vb)$
        \STATE $\hat{\vnu} \gets \vb \odot (K^\top \va)$
        \UNTIL{$\|\hat{\vmu} - \vmu\|_1 + \|\vnu - \hat{\vnu}\|_1 < \xi$}
        \STATE $\hat\pi_\varepsilon \gets \diag{(\va)}K\diag{(\vb)}$
        \STATE $\bar{x} \gets \code{mean}(\calX)$
        \STATE $\vw \gets \{\rho(\bar{x},x_i)^p\}_i$
        \STATE $\Delta_{\hat\mu} \gets 2^{1-\frac{1}{p}} \inner{\vw, |\hat{\vmu} - \vmu|}^{\frac{1}{p}}$
        \STATE $\Delta_{\hat\nu} \gets 2^{1-\frac{1}{p}} \inner{\vw, |\vnu - \hat{\vnu}|}^{\frac{1}{p}}$
        \RETURN $\inner{\hat\pi_\varepsilon, C}^\frac{1}{p} + \Delta_{\hat\mu} + \Delta_{\hat\nu}$
    \end{algorithmic}
\end{algorithm}

Algorithm for quantization-based upper bound described in \cref{sec: Weighted-Cost Upper Bound}, using coarse cost weighted by the trivial coupling.
\begin{algorithm}[H]
    \caption{Weighted-Cost Upper Bound}
    \label{alg: weighted cost upper bound}
    \begin{algorithmic}
        \REQUIRE $\vmu,\vnu \in \Sigma_{N^d}$ on $\calX = [N]^d$, $p \ge 1, \kappa\in\N$ and metric $\rho$.
        \STATE $\tilde{\vmu} \gets \code{SumPool}(\vmu;\kappa)$
        \STATE $\tilde{\vnu} \gets \code{SumPool}(\vnu;\kappa)$
        \STATE $\bar{C}_{k\ell}
        \gets
        \left\{\frac{1}{\tilde\mu_k \tilde\nu_\ell} \sum_{\substack{
        x\in X_k \\
        y\in Y_\ell}}
        \rho(x,y)^p \mu(x)\nu(y) \right\}_{k\ell}$
        \STATE Solve $L_{\bar{C}} \gets \min_{\tilde{\pi} \in \Pi(\tilde{\vmu}, \tilde{\vnu})} \inner{\tilde{\pi}, \bar{C}}$
        \RETURN ${L_{\bar{C}} }^{\frac{1}{p}}$
    \end{algorithmic}
\end{algorithm}

Algorithm for bi-level quantization-based upper bound described in \cref{sec: Upscaling upper bound}, using nearest-neighbor upscaling of the optimal coarse coupling, iterative proportional fitting of the marginals (i.e. Sinkhorn iterations), with weighted total variation marginal corrections.
\begin{algorithm}[H]
    \caption{Upscaling Upper Bound}
    \label{alg: upscaling upper bound}
    \begin{algorithmic}
        \REQUIRE $\vmu,\vnu \in \Sigma_{N^d}$ on $\calX=[N]^d$,\, $p \ge 1,\, \kappa\in\N,\, \xi>0$ and metric $\rho$.
        \STATE $\tilde{\calX} \gets \code{AvgPool}(\calX;\kappa)$
        \STATE $\tilde{\vmu} \gets \code{SumPool}(\vmu;\kappa)$
        \STATE $\tilde{\vnu} \gets \code{SumPool}(\vnu;\kappa)$
        \STATE $\tilde{C} \gets \{\rho(\tilde{x}_k, \tilde{x}_\ell)^p \}_{k\ell}$
        \STATE Solve $\tilde{\pi}^* \gets \argmin_{\tilde{\pi} \in \Pi(\tilde{\vmu}, \tilde{\vnu})} \inner{\tilde{\pi}, \tilde{C}}$
        \STATE \COMMENT{\nameref{p: step 1}}
        \STATE $\tilde{\tP}^* \gets \code{reshape}(\tilde{\pi}^*;n)$ \hfill\COMMENT{Reshape as tensor}
        \STATE $\tK \gets \{\kappa^{-2d}\}_{t \in [\kappa]^{2d}}$
        \STATE $\hat{\tP} \gets \tilde{\tP}^* \otimes \tK$ \hfill\COMMENT{Upscaling}
        \STATE $\hat{\pi} \gets \code{reshape}^{-1}(\hat{\tP};N)$
        \STATE \COMMENT{\nameref{p: step 2}}
        \STATE Initialize $\vb \gets \ones_{N^d}$
        \REPEAT
        \STATE $\va \gets \vmu \oslash \hat{\pi}\vb$
        \STATE $\vb \gets \vnu \oslash \hat{\pi}^\top \va$
        \STATE $\hat{\vmu} \gets \va \odot (\hat{\pi} \vb)$
        \STATE $\hat{\vnu} \gets \vb \odot (\hat{\pi}^\top \va)$
        \UNTIL{$\|\hat{\vmu} - \vmu\|_1 + \|\vnu - \hat{\vnu}\|_1 < \xi$}
        \STATE \COMMENT{\nameref{p: step 3}}
        \STATE $\widehat{\Wass}_p \gets \left(\sum_{(i, j) \in \supp(\hat{\pi}) } a_i\hat{\pi}_{ij}b_j \, \rho(x_i, x_j)^p \right)^{\frac{1}{p}}$
        \STATE $\bar{x} \gets \code{mean}(\calX)$
        \STATE $\vw \gets \{\rho(\bar{x},x_i)^p\}_i$
        \STATE $\Delta_{\hat\mu} \gets 2^{1-\frac{1}{p}} \inner{\vw, |\hat{\vmu} - \vmu|}^{\frac{1}{p}}$
        \STATE $\Delta_{\hat\nu} \gets 2^{1-\frac{1}{p}} \inner{\vw, |\vnu - \hat{\vnu}|}^{\frac{1}{p}}$
        \RETURN $\widehat{\Wass}_p + \Delta_{\hat\mu} + \Delta_{\hat\nu}$
    \end{algorithmic}
\end{algorithm}

Algorithm for bi-level quantization-based lower bound described in \cref{sec: Upscaling Lower Bound}, using interpolation for upscaling the optimal coarse dual potentials, and c-transform to achieve optimized admissible dual potentials pair.
\begin{algorithm}[H]
    \caption{Upscaling Lower Bound}
    \label{alg: upscaling lower bound}
    \begin{algorithmic}
        \REQUIRE $\vmu,\vnu \in \Sigma_{N^d}$ on $\calX = [N]^d$, $p \ge 1, \kappa\in\N$ and metric $\rho$.
        \STATE $\tilde{\calX} \gets \texttt{AvgPool}(\calX;\kappa)$
        \STATE $\tilde{\vmu} \gets \code{SumPool}(\vmu;\kappa)$
        \STATE $\tilde{\vnu} \gets \code{SumPool}(\vnu;\kappa)$
        \STATE $\tilde{C} \gets \{\rho(\tilde{x}_k, \tilde{x}_\ell)^p \}_{k\ell}$
        \STATE Solve $(\tilde{\vf}^*,\tilde{\vg}^*) \gets \argmax\limits_{(\tilde{\vf},\tilde{\vg}) \in \mathcal{R}(\tilde{C})} \inner{\tilde{\vf},\tilde\vmu} + \inner{\tilde{\vg}, \tilde{\vnu}}$
        \STATE $\hat{\vf} \gets \{R_{\tilde{\vf}, \tilde{\calX}}(x_i)\}_{i\in[N^d]}$
        \STATE $\vg \gets \left\{\min_{i} \rho(x_i,x_j)^p - \hat{f}_i\right\}_j$
        \STATE $\vf \gets \left\{\min_{j} \rho(x_i,x_j)^p - g_j\right\}_i$
        \RETURN $\big(\inner{\vf,\vmu} + \inner{\vg,\vnu} \big)^{\frac{1}{p}}$
    \end{algorithmic}
\end{algorithm}

\FloatBarrier
\begin{table}
    \centering
    \small
    \setlength{\tabcolsep}{2pt}
    \begin{tabular}{lccccccc|cccccc}
    \toprule
     & & \multicolumn{6}{c}{Upper Bounds} & \multicolumn{6}{c}{Lower Bounds} \\
    \cmidrule{3-8}\cmidrule{9-14}
     & & \multicolumn{2}{c}{\shortstack{Weighted-\\Cost}} & \multicolumn{2}{c}{\shortstack{Primal\\Upscaling}} & \multicolumn{2}{c}{\shortstack{Entropic\\Regularization}} & \multicolumn{2}{c}{\shortstack{Dual\\Upscaling}} & \multicolumn{2}{c}{\shortstack{Min-\\Cost}} & \multicolumn{2}{c}{\shortstack{Entropic\\Regularization}} \\
     & & $\kappa_2$ & $\kappa_4$ & $\kappa_2$ & $\kappa_4$ & $\varepsilon_1$ & $\varepsilon_4$ & $\kappa_2$ & $\kappa_4$ & $\kappa_2$ & $\kappa_4$ & $\varepsilon_1$ & $\varepsilon_4$ \\
    Class & p & \multicolumn{12}{c}{} \\
    \midrule
    \multirow[t]{4}{*}{Classic} & \multirow[t]{2}{*}{1} & 4.4\% & \textbf{0.3\%} & 5.0\% & \textbf{0.3\%} & 19.8\% & 16.0\% & 4.9\% & \textbf{0.2\%} & 4.7\% & 0.3\% & 17.6\% & 18.7\% \\Images
     &  & {\scriptsize $\pm$ 2.1\%} & {\scriptsize $\pm$ 0.2\%} & {\scriptsize $\pm$ 2.6\%} & {\scriptsize $\pm$ 0.4\%} & {\scriptsize $\pm$ 10.1\%} & {\scriptsize $\pm$ 11.2\%} & {\scriptsize $\pm$ 2.6\%} & {\scriptsize $\pm$ 0.1\%} & {\scriptsize $\pm$ 2.3\%} & {\scriptsize $\pm$ 0.1\%} & {\scriptsize $\pm$ 7.8\%} & {\scriptsize $\pm$ 13.3\%} \\
    \cline{2-14}\noalign{\vskip 1.5pt}
     & \multirow[t]{2}{*}{2} & 6.1\% & 0.4\% & 6.1\% & \textbf{0.3\%} & 7.9\% & 1.3\% & 6.1\% & \textbf{0.3\%} & 6.1\% & \textbf{0.3\%} & 12.3\% & 1.3\% \\
     &  & {\scriptsize $\pm$ 3.1\%} & {\scriptsize $\pm$ 0.1\%} & {\scriptsize $\pm$ 3.0\%} & {\scriptsize $\pm$ 0.1\%} & {\scriptsize $\pm$ 6.6\%} & {\scriptsize $\pm$ 0.5\%} & {\scriptsize $\pm$ 3.0\%} & {\scriptsize $\pm$ 0.1\%} & {\scriptsize $\pm$ 2.9\%} & {\scriptsize $\pm$ 0.1\%} & {\scriptsize $\pm$ 10.4\%} & {\scriptsize $\pm$ 0.6\%} \\
    \midrule
    \multirow[t]{4}{*}{Micro-} & \multirow[t]{2}{*}{1} & 16.7\% & \textbf{2.3\%} & 20.7\% & 6.6\% & 128.0\% & 90.7\% & 14.4\% & 2.2\% & 15.7\% & \textbf{1.9\%} & 120.8\% & 86.9\% \\scopy
     &  & {\scriptsize $\pm$ 5.6\%} & {\scriptsize $\pm$ 1.4\%} & {\scriptsize $\pm$ 9.6\%} & {\scriptsize $\pm$ 10.6\%} & {\scriptsize $\pm$ 68.3\%} & {\scriptsize $\pm$ 79.2\%} & {\scriptsize $\pm$ 5.0\%} & {\scriptsize $\pm$ 1.2\%} & {\scriptsize $\pm$ 4.8\%} & {\scriptsize $\pm$ 1.0\%} & {\scriptsize $\pm$ 57.2\%} & {\scriptsize $\pm$ 81.3\%} \\
    \cline{2-14}\noalign{\vskip 1.5pt}
     & \multirow[t]{2}{*}{2} & 15.4\% & \textbf{1.9\%} & 17.4\% & 3.0\% & 75.6\% & 7.7\% & 15.6\% & 2.0\% & 14.9\% & \textbf{1.7\%} & 99.9\% & 8.1\% \\
     &  & {\scriptsize $\pm$ 5.2\%} & {\scriptsize $\pm$ 1.2\%} & {\scriptsize $\pm$ 5.9\%} & {\scriptsize $\pm$ 3.4\%} & {\scriptsize $\pm$ 51.3\%} & {\scriptsize $\pm$ 4.9\%} & {\scriptsize $\pm$ 5.1\%} & {\scriptsize $\pm$ 1.3\%} & {\scriptsize $\pm$ 5.0\%} & {\scriptsize $\pm$ 1.0\%} & {\scriptsize $\pm$ 72.8\%} & {\scriptsize $\pm$ 5.5\%} \\
    \midrule
    \multirow[t]{4}{*}{Shapes} & \multirow[t]{2}{*}{1} & 10.2\% & \textbf{1.9\%} & 15.1\% & 10.8\% & 172.9\% & 68.8\% & 7.8\% & 1.5\% & 8.5\% & \textbf{1.4\%} & 163.6\% & 77.0\% \\
     &  & {\scriptsize $\pm$ 3.6\%} & {\scriptsize $\pm$ 1.7\%} & {\scriptsize $\pm$ 8.9\%} & {\scriptsize $\pm$ 11.9\%} & {\scriptsize $\pm$ 99.0\%} & {\scriptsize $\pm$ 72.7\%} & {\scriptsize $\pm$ 2.0\%} & {\scriptsize $\pm$ 0.7\%} & {\scriptsize $\pm$ 2.4\%} & {\scriptsize $\pm$ 1.2\%} & {\scriptsize $\pm$ 88.8\%} & {\scriptsize $\pm$ 77.6\%} \\
    \cline{2-14}\noalign{\vskip 1.5pt}
     & \multirow[t]{2}{*}{2} & 7.9\% & \textbf{1.5\%} & 9.6\% & 4.2\% & 39.5\% & 6.0\% & 7.4\% & 1.2\% & 7.3\% & \textbf{1.0\%} & 56.2\% & 5.7\% \\
     &  & {\scriptsize $\pm$ 3.8\%} & {\scriptsize $\pm$ 1.8\%} & {\scriptsize $\pm$ 5.6\%} & {\scriptsize $\pm$ 6.4\%} & {\scriptsize $\pm$ 45.0\%} & {\scriptsize $\pm$ 11.6\%} & {\scriptsize $\pm$ 2.6\%} & {\scriptsize $\pm$ 0.6\%} & {\scriptsize $\pm$ 3.0\%} & {\scriptsize $\pm$ 1.4\%} & {\scriptsize $\pm$ 69.1\%} & {\scriptsize $\pm$ 11.8\%} \\
    \bottomrule
    \end{tabular}
    \caption{Computational time comparison of different methods showing the mean $\pm$ standard deviation of the relative computation time compared to exact OT solver. Each method is evaluated at different fidelity level $\kappa_2=2$ and $\kappa_4=4$ and different values of $\varepsilon_1 = 1\cdot 10^{-3}N^p$ and $\varepsilon_4=4\cdot 10^{-3}N^p$.}
    \label{tab:DOTmark time comparison}
\end{table}
\begin{figure}
    \centering
    \begin{subfigure}
      \centering
      \includegraphics[width=\linewidth]{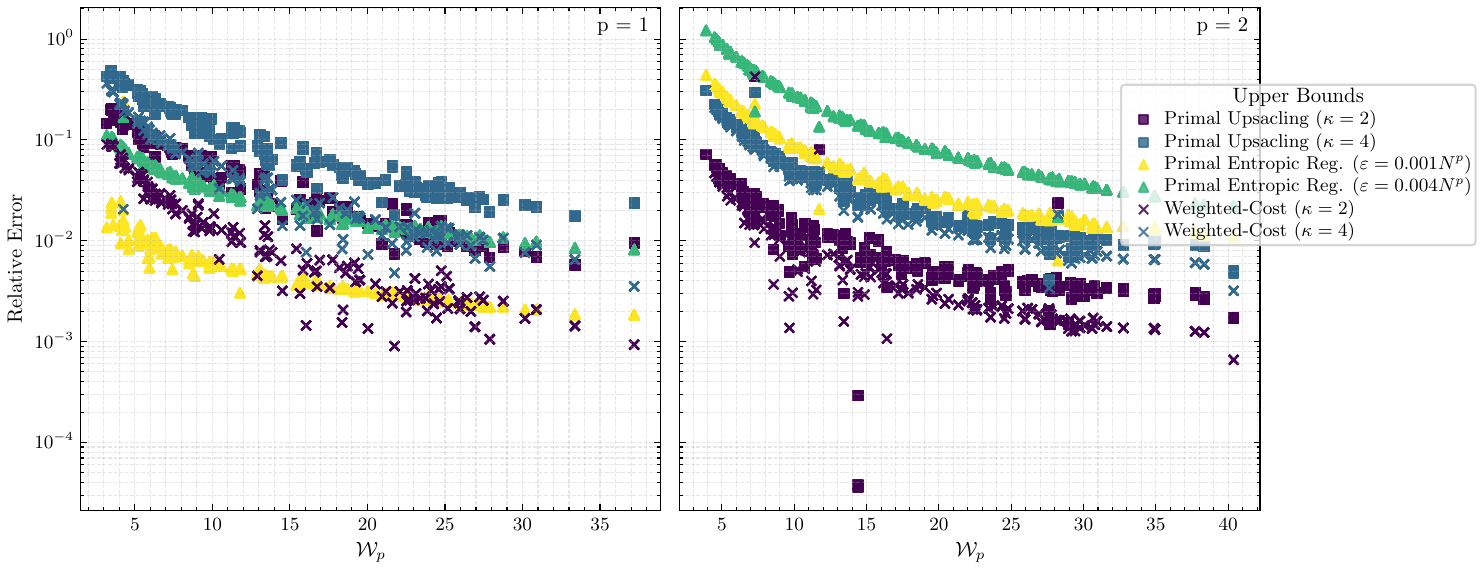}
      \vspace{-2em}
      \caption{Accuracy of Wasserstein Upper Bounds}
      \label{fig: accuracy of wasserstein upper bounds}
    \end{subfigure}
    \begin{subfigure}
      \centering
      \includegraphics[width=\linewidth]{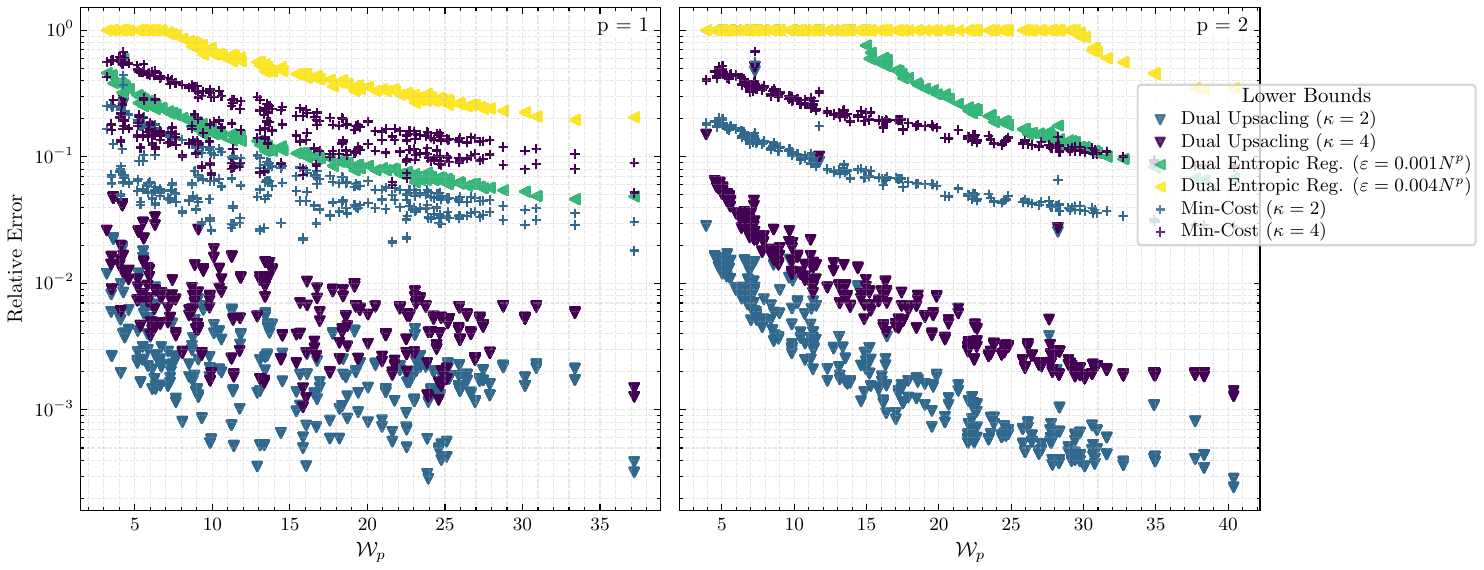}
      \vspace{-2em}
      \caption{Accuracy of Wasserstein Lower Bounds. Negative-valued bounds are clipped to 0, evaluating as 100\% relative error.}
      \label{fig: accuracy of wasserstein lower bounds}
    \end{subfigure}
\end{figure}
\section{Experiments}\label{sec: additional experiments}
\subsection{DOTmark}
In this section we present additional figures and results evaluating the proposed Wasserstein bounds on the discrete optimal transport benchmark (DOTMark) \citep{schrieberDOTmarkBenchmarkDiscrete2017} presented in the main text. The computational speed up of the proposed methods compared to the exact OT solver are summarized in \cref{tab:DOTmark time comparison}. The results show that the quantization methods achieve the most significant speed ups. Notably, the dual upscaling method at $\kappa=4$ are calculated in 0.2-2.2\% of the time, while making almost no sacrifice in accuracy. Maintaining no more than 2.4\% average error.

The relative accuracy of the proposed methods exponentially improves for large values of the exact Wasserstein distance as evident in \cref{fig: accuracy of wasserstein upper bounds,fig: accuracy of wasserstein lower bounds}. Negative-valued lower bounds are trivially clipped to 0, when evaluate in the benchmark.

\subsection{EMDB}
The Electron Microscopy Data Bank (EMDB) \citep{ThewwPDBConsortium2024} is a repository of volumetric density maps that contains many interesting molecules that were reconstructed from cryogenic electron microscopy (cryo-EM) experiments.
These reconstructions are estimates of the 3D electric potential at every point in the molecule.
For our 3D experiments, we downloaded and processed four maps of famous molecules, detailed in \cref{tab:emdb} using the ASPIRE package \citep{WrightAndenBansalXiaLangfieldEtAl2025}.
In \cref{fig:cryo-em-rotations} you can see 3D renderings of these molecules that we generated using UCSF ChimeraX \citep{MengGoddardPettersenCouchPearsonEtAl2023}.
The volumetric maps were downloaded from EMDB, masked inside a spherical region of radius $128$ pixels, rotated around the Z axis in increments of 20 degrees and downscaled to $16 \times 16 \times 16$ voxels.
The computational speed up is summarized in \cref{tbl:relative_time_results}, showing that even at $\kappa=2$ the quantization methods provide substantial speedups.
\cref{fig:emdb-rotations} shows the Wasserstein metrics and bounds between the 3D density map of the molecule in its base orientation and its rotations around the Z axis.
The exact Wasserstein metric is shown as the thick black line with upper and lower bounds next to it using the various markers.
\begin{table}[b]
\caption{Selected cryo-EM structures from the Electron Microscopy Data Bank (EMDB).}
\label{tab:emdb}
\centering
\begin{tabular}{@{}llp{9cm}@{}}
\toprule
\textbf{Name} & \textbf{EMDB ID} & \textbf{Description} \\ \midrule
Ribosome & \href{https://www.ebi.ac.uk/emdb/EMD-2660}{EMD-2660} &
Ribosome of the Plasmodium falciparum parasite which causes malaria in humans \citep{WongBaiBrownFernandezHanssenEtAl2014}\\[2pt]

SARS-CoV-2 & \href{https://www.ebi.ac.uk/emdb/EMD-14621}{EMD-14621} &
SARS-CoV-2 spike protein \citep{StagnoliPeccatiConnellMartinez-CastilloCharroEtAl2022}\\[2pt]

Yeast & \href{https://www.ebi.ac.uk/emdb/EMD-8012}{EMD-8012} &
Yeast spliceosome \citep{NguyenGalejBaiOubridgeNewmanEtAl2016}\\[2pt]

HIV & \href{https://www.ebi.ac.uk/emdb/EMD-2484}{EMD-2484} &
HIV-1 trimeric spike pre-fusion \citep{BartesaghiMerkBorgniaMilneSubramaniam2013}\\ \bottomrule
\end{tabular}
\end{table}
\begin{figure}
    \setlength{\tabcolsep}{0pt}
    \hspace{-1cm}
    \begin{tabular}{cccc}
        & \textbf{XY plane} & \textbf{XZ plane} & \textbf{YZ plane} \\[0.5em]
        \rotatebox{90}{\qquad\quad \bf Ribosome}&\adjustbox{trim=1.5cm 0.5cm 1.5cm 0cm}{
        \includegraphics[height=5.5cm]{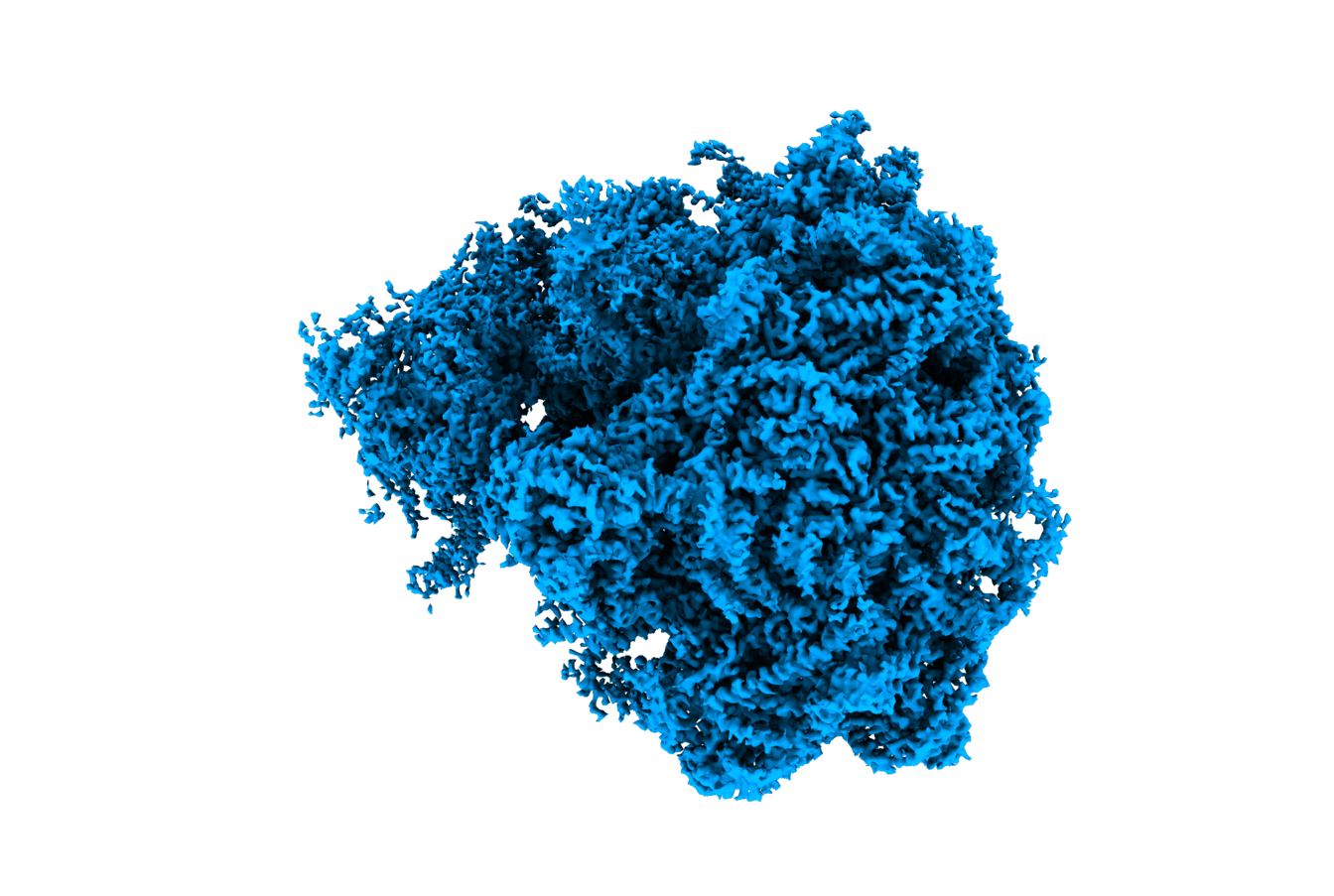}}&\adjustbox{trim=1.9cm 1.0cm 1.5cm 0cm, raise=0.3cm}{\includegraphics[height=5.5cm]{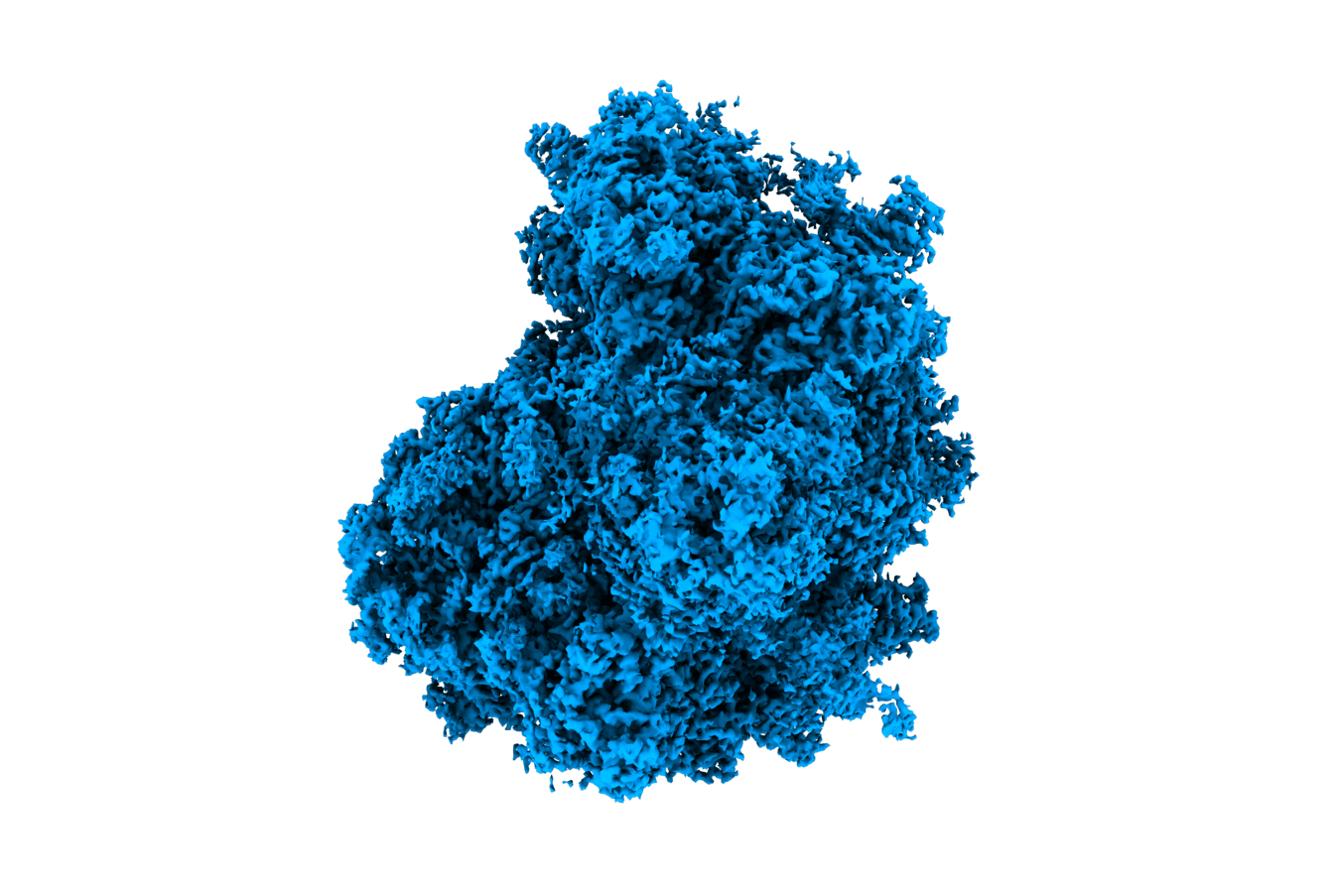}}
        \\
        \rotatebox{90}{\qquad\quad\ \  \bf SARS-CoV-2}&\adjustbox{trim=0.6cm 0.2cm 1.5cm 0cm}{
         \includegraphics[height=5cm]{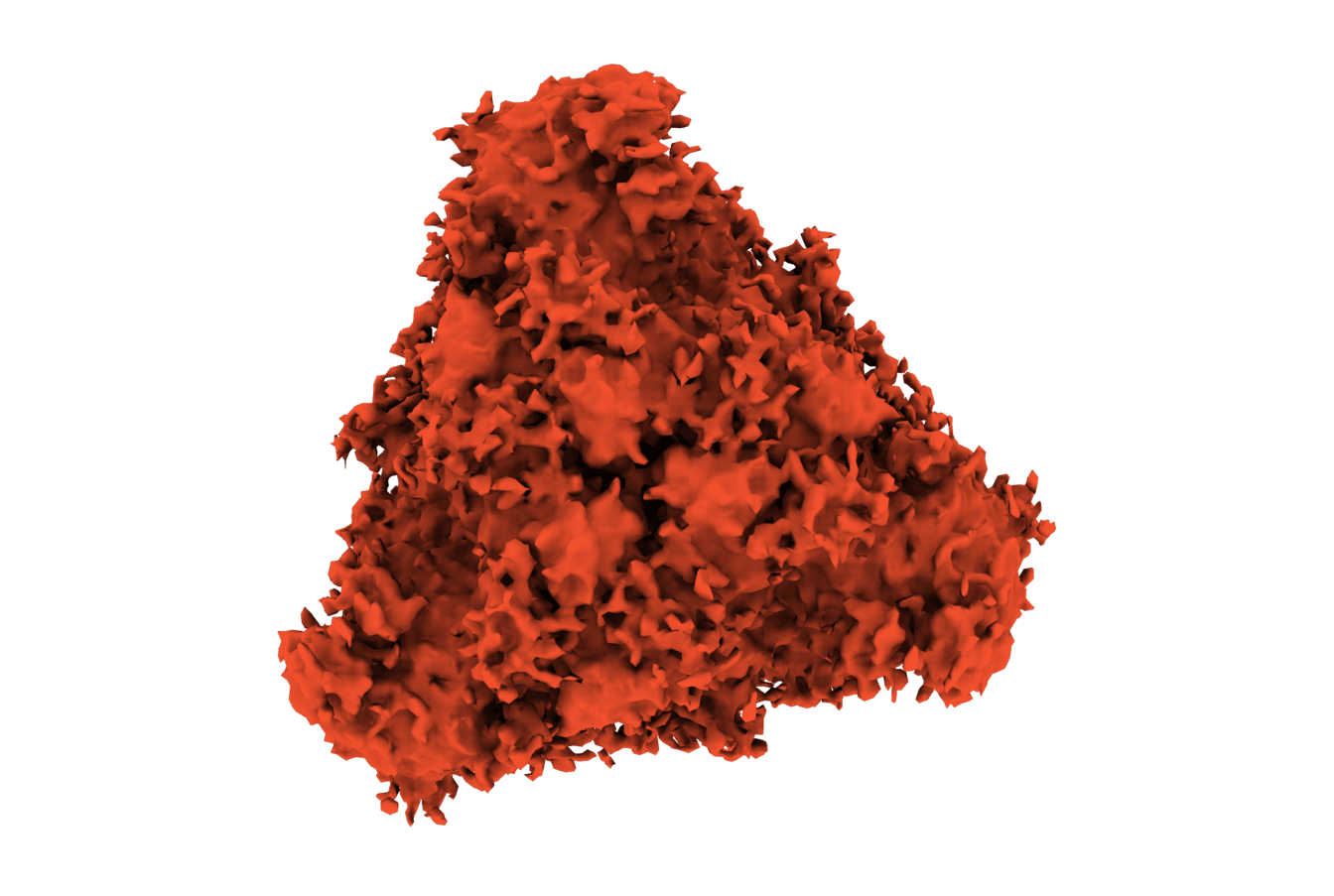}}&\adjustbox{trim=1.5cm 0.2cm 1.5cm 0cm}{
         \includegraphics[height=5cm]{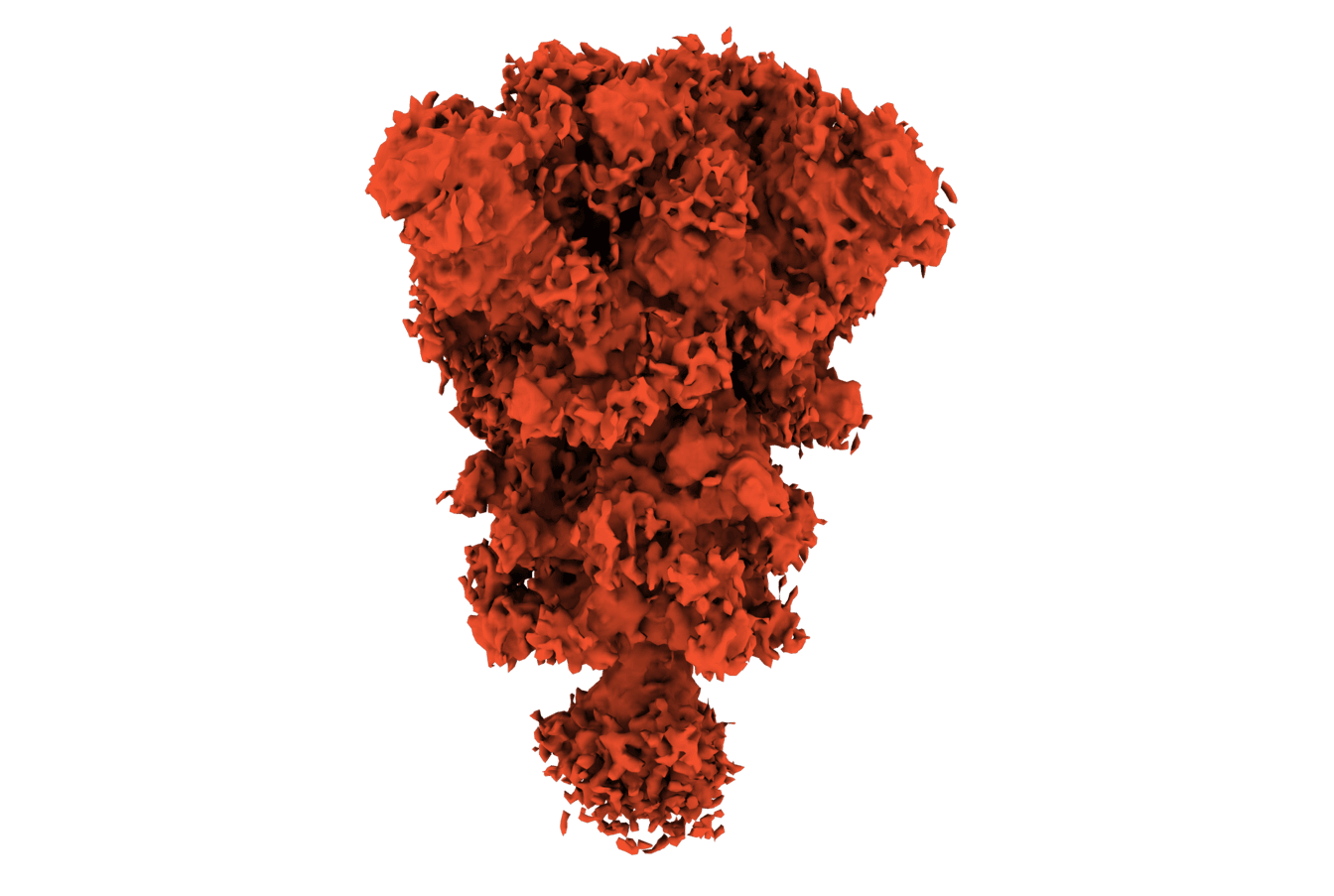}}&\adjustbox{trim=3cm 0.7cm 1.5cm 0cm, raise=0.3cm}{\includegraphics[height=5cm]{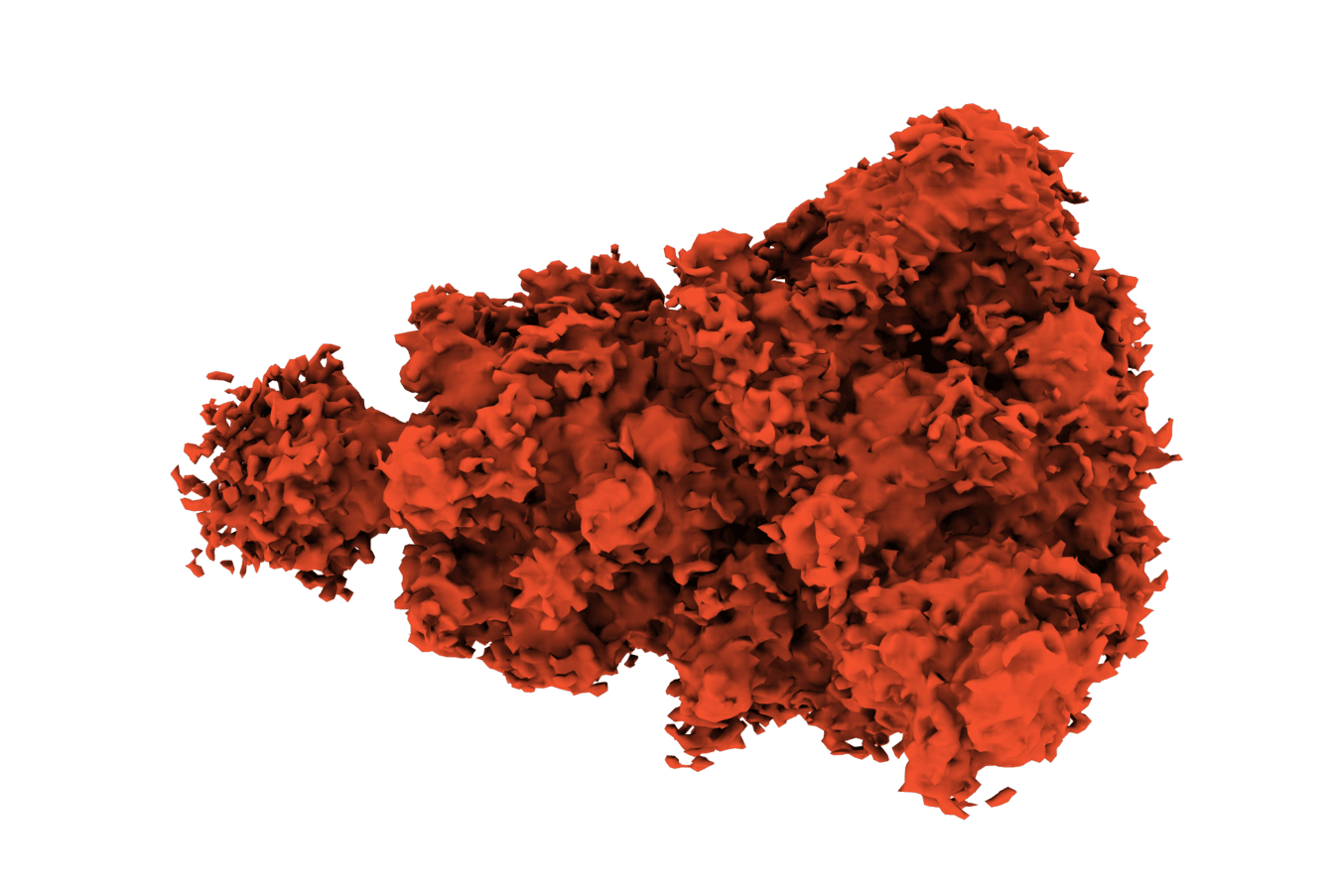}}
        \\
        \rotatebox{90}{\qquad\qquad\qquad \bf Yeast}&\adjustbox{trim=0.2cm 0cm 1.2cm 0.5cm}{\includegraphics[height=5cm]{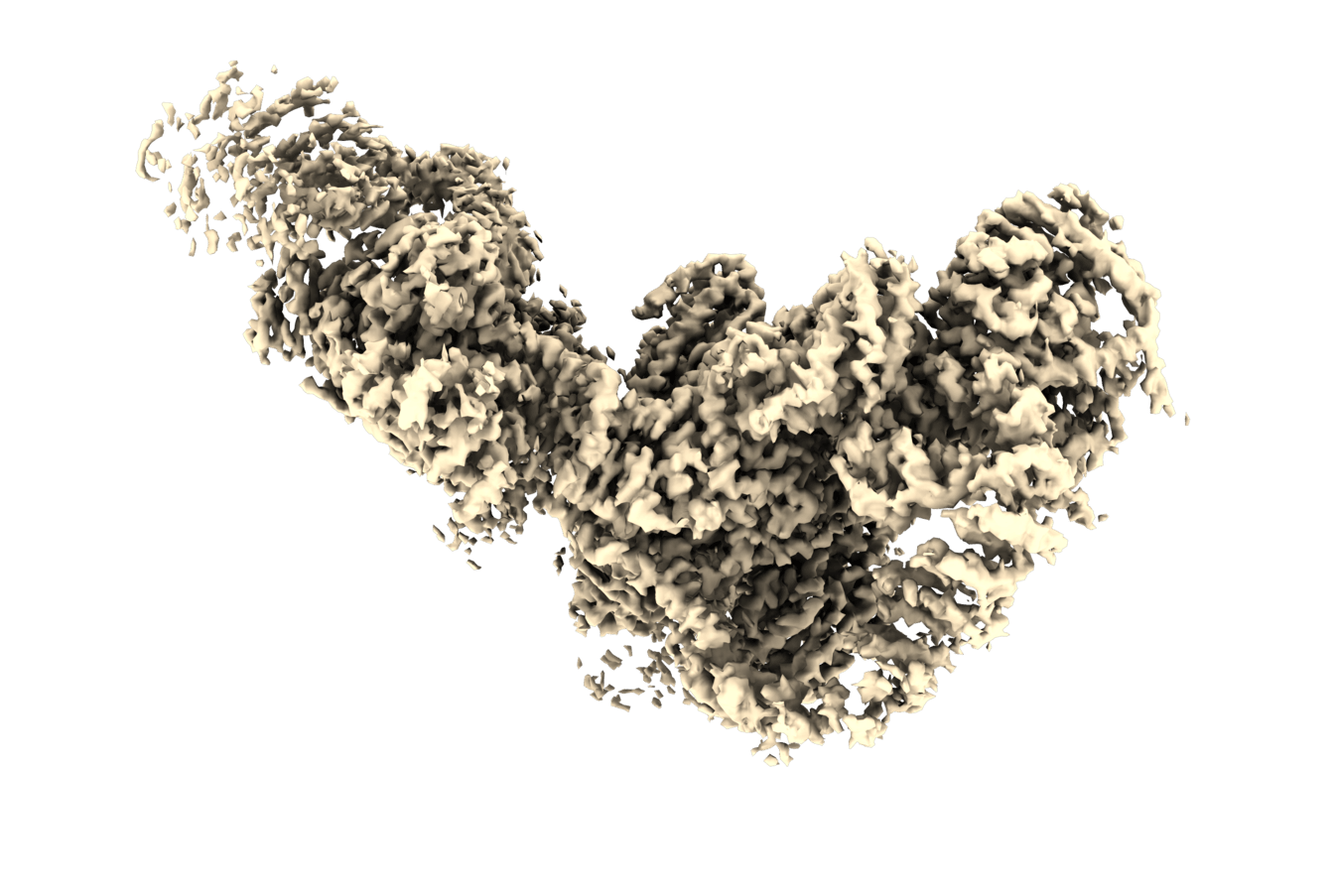}}&\adjustbox{trim=0.8cm 0cm 1.1cm 0.5cm}{\includegraphics[height=5cm]{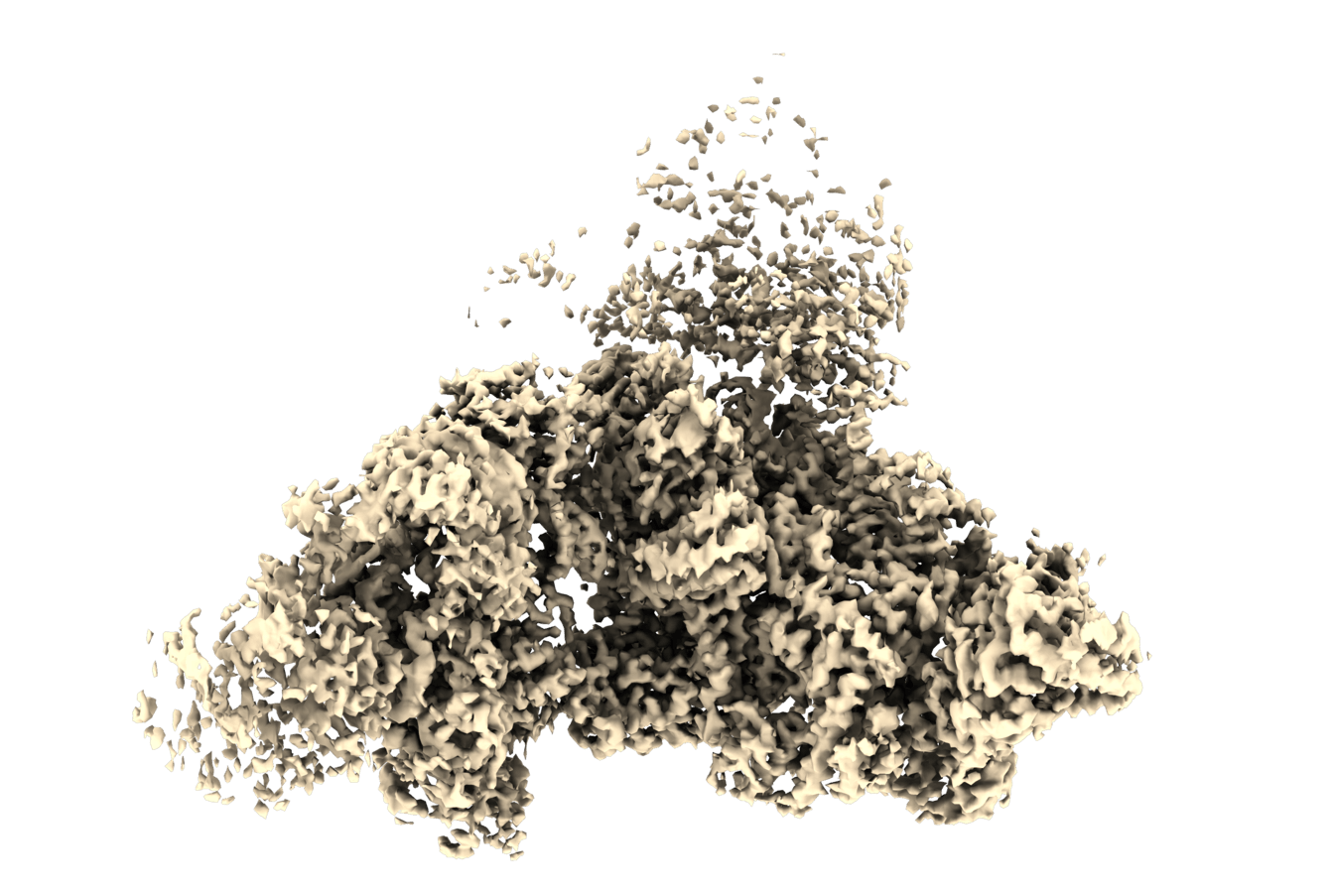}}&\adjustbox{trim=4cm 0.5cm 3cm 0.5cm}{\includegraphics[height=5cm]{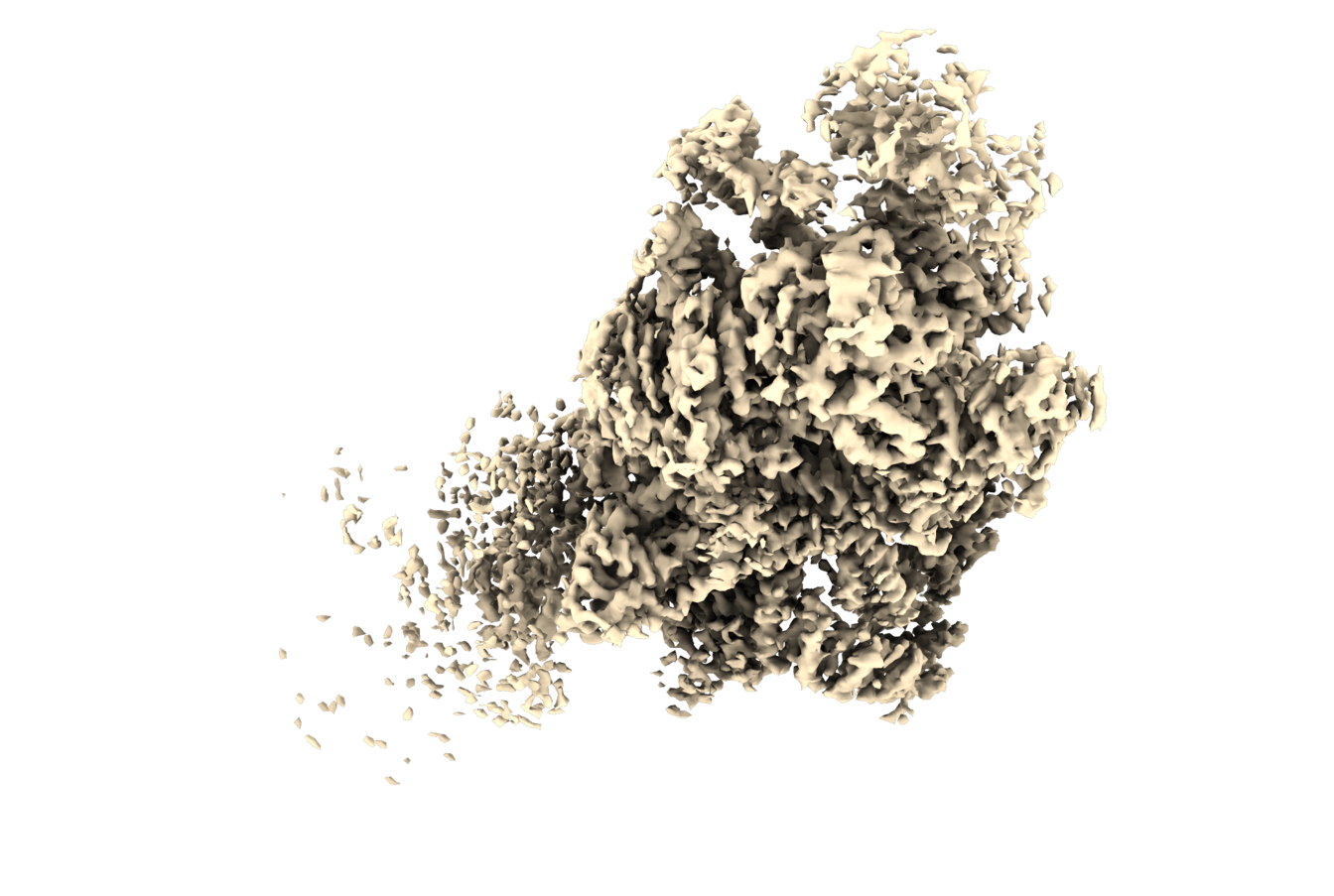}}
        \\
        \rotatebox{90}{\qquad\qquad\quad\  \bf HIV}&\adjustbox{trim=1.2cm 0cm 1.0cm 0.5cm}{
        \includegraphics[height=4.5cm]{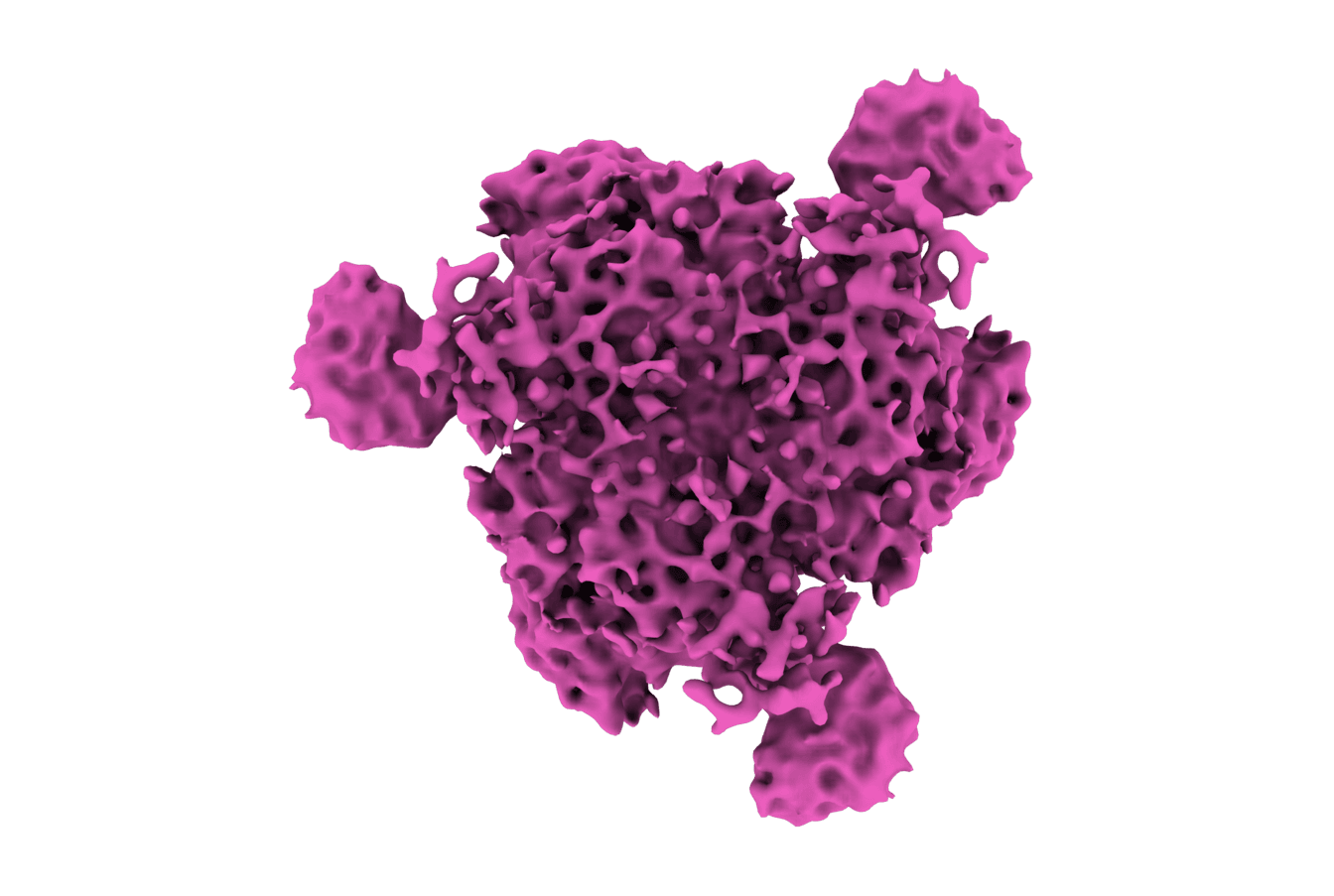}}&\adjustbox{trim=1.8cm 0cm 1.5cm 0.5cm}{\includegraphics[height=4.5cm]{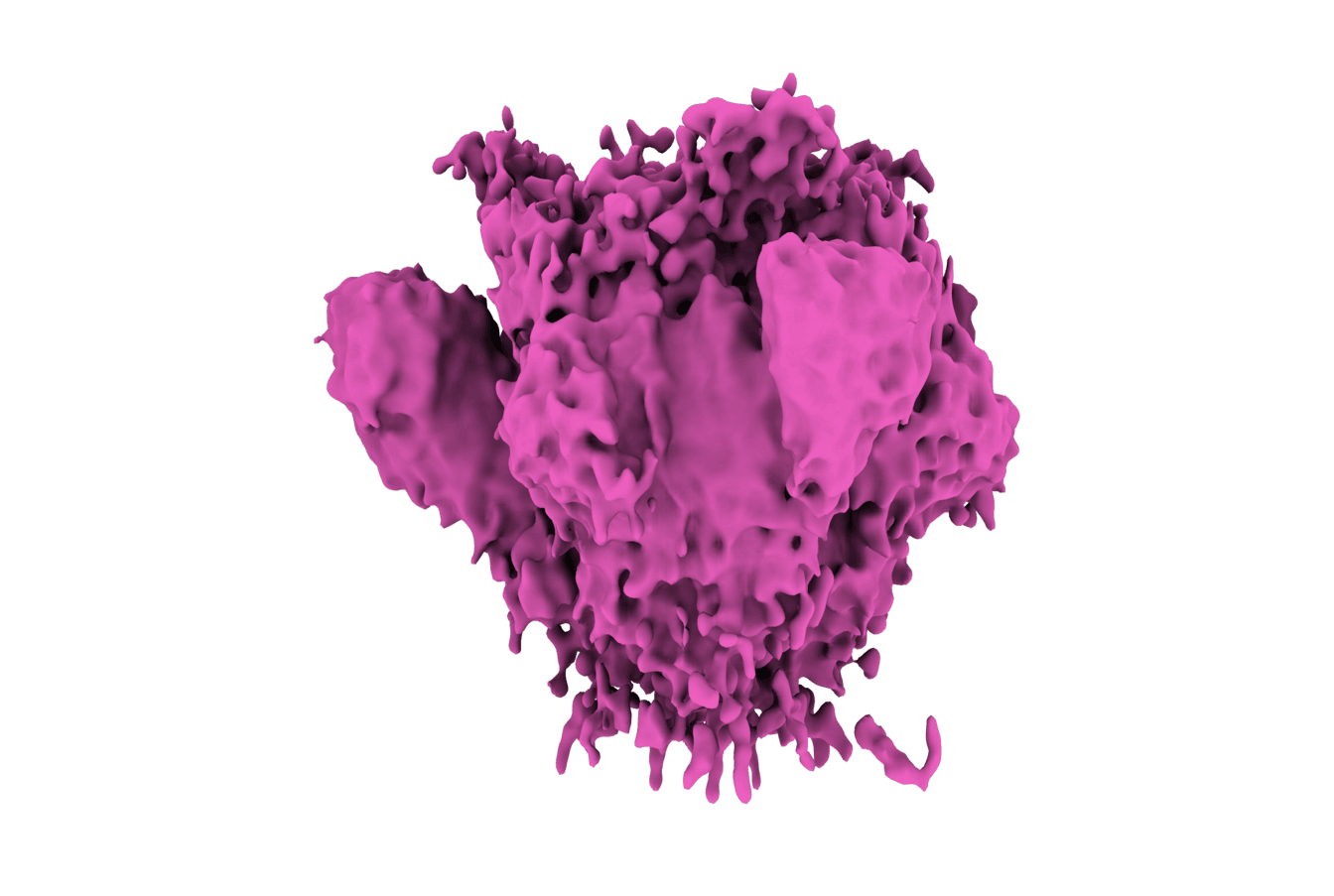}}&\adjustbox{trim=1.0cm 0.5cm 0cm 0.5cm, raise=0.3cm}{
        \includegraphics[height=4.5cm]{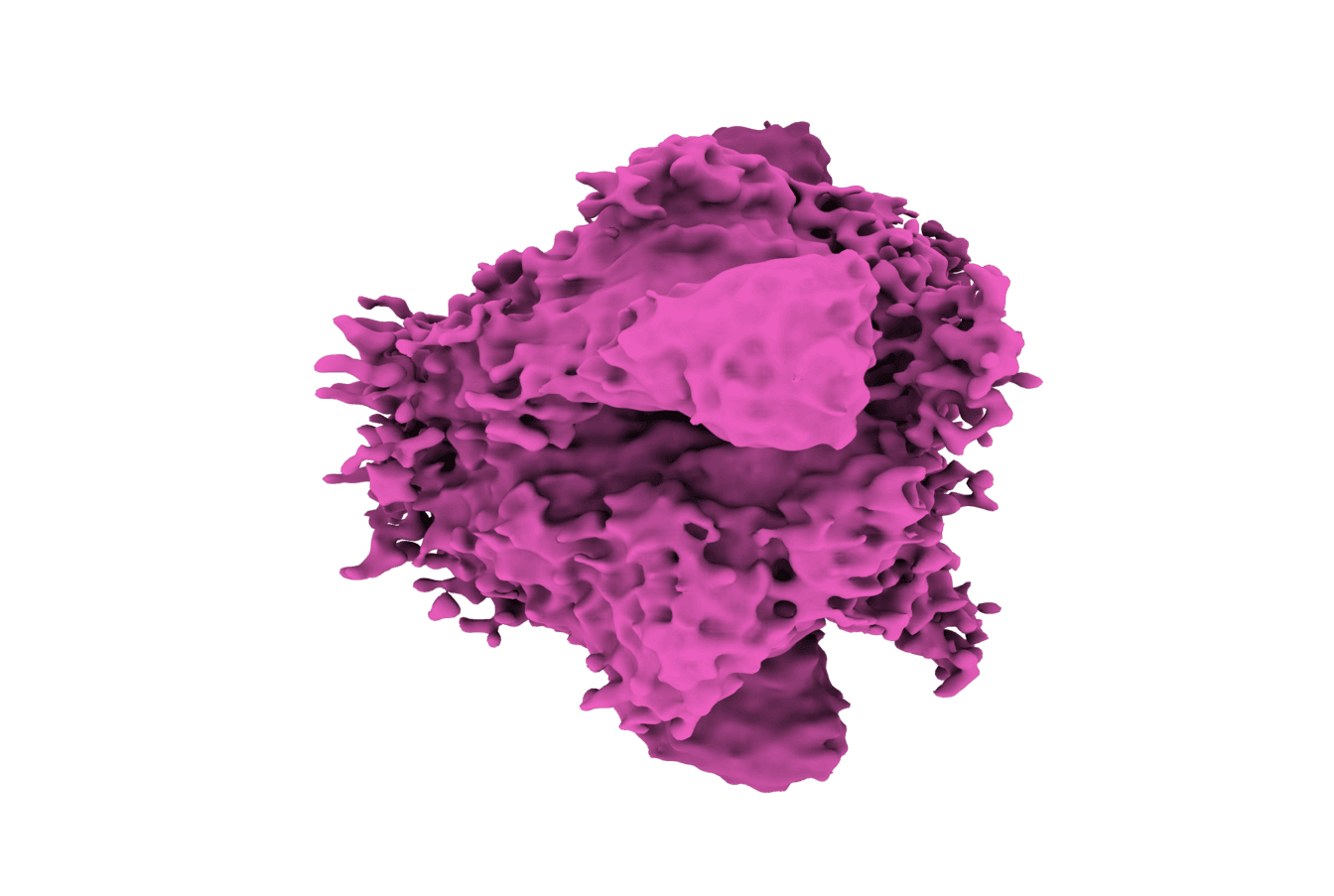}}
    \end{tabular}
    \setlength{\tabcolsep}{6pt} 
    \caption{Isosurfaces of 3D molecular densities from the Electron Microscopy Data Bank (EMDB). In our experiments, the molecules on the left are rotated around the Z axis, which corresponds to the depth direction here. The middle and right columns show the same molecules rotated by 90 degrees around the X and Y axis (respectively).}
    \label{fig:cryo-em-isosurfaces}
\end{figure}

\begin{table}
    \centering
    \small
    \setlength{\tabcolsep}{4pt}
    \caption{Relative computation time compared to exact OT solver of $16\times16\times16$ downscaled EMDB density maps. Results show mean and standard deviation across rotation angles. Lower is better.}
    \label{tbl:relative_time_results}
    \begin{tabular}{llcccc|cccc}
        \toprule
        & & \multicolumn{4}{c}{Upper Bounds} & \multicolumn{4}{c}{Lower Bounds} \\
        \cmidrule{3-6}\cmidrule{7-10}
        & & \multicolumn{1}{c}{\shortstack{Weighted-\\Cost}} & \multicolumn{1}{c}{\shortstack{Primal\\Upscaling}} & \multicolumn{2}{c}{\shortstack{Entropic\\Regularization}} & \multicolumn{1}{c}{\shortstack{Dual\\Upscaling}} & \multicolumn{1}{c}{\shortstack{Min-\\Cost}} & \multicolumn{2}{c}{\shortstack{Entropic\\Regularization}} \\
        Molecule & $p$ & $\kappa_2$ & $\kappa_2$ & $\varepsilon_1$ & $\varepsilon_4$ & $\kappa_2$ & $\kappa_2$ & $\varepsilon_1$ & $\varepsilon_4$ \\
        \midrule
        \multirow[t]{4}{*}{Ribosome}
        & \multirow[t]{2}{*}{1} & \textbf{3.7\%} & 175.6\% & 156.2\% & 155.8\% & 74.5\% & \textbf{3.5\%} & 154.1\% & 154.1\% \\
        & & {\scriptsize$\pm$3.2\%} & {\scriptsize$\pm$56.2\%} & {\scriptsize$\pm$111.3\%} & {\scriptsize$\pm$111.1\%} & {\scriptsize$\pm$21.8\%} & {\scriptsize$\pm$3.0\%} & {\scriptsize$\pm$109.8\%} & {\scriptsize$\pm$109.7\%} \\
        \cmidrule{2-10}
        & \multirow[t]{2}{*}{2} & \textbf{12.4\%} & 178.1\% & 569.8\% & 570.2\% & \textbf{12.2\%} & 12.5\% & 563.2\% & 563.6\% \\
        & & {\scriptsize$\pm$3.7\%} & {\scriptsize$\pm$226.5\%} & {\scriptsize$\pm$252.8\%} & {\scriptsize$\pm$252.3\%} & {\scriptsize$\pm$3.6\%} & {\scriptsize$\pm$5.0\%} & {\scriptsize$\pm$249.8\%} & {\scriptsize$\pm$249.6\%} \\
        \midrule
        \multirow[t]{4}{*}{SARS-CoV-2}
        & \multirow[t]{2}{*}{1} & \textbf{11.4\%} & 164.6\% & 69.1\% & 62.7\% & 69.9\% & \textbf{2.2\%} & 61.4\% & 61.2\% \\
        & & {\scriptsize$\pm$26.7\%} & {\scriptsize$\pm$96.4\%} & {\scriptsize$\pm$4.7\%} & {\scriptsize$\pm$15.1\%} & {\scriptsize$\pm$35.7\%} & {\scriptsize$\pm$0.3\%} & {\scriptsize$\pm$14.0\%} & {\scriptsize$\pm$14.8\%} \\
        \cmidrule{2-10}
        & \multirow[t]{2}{*}{2} & \textbf{19.7\%} & 350.5\% & 850.9\% & 658.8\% & 77.9\% & \textbf{19.0\%} & 642.5\% & 641.6\% \\
        & & {\scriptsize$\pm$3.5\%} & {\scriptsize$\pm$421.2\%} & {\scriptsize$\pm$895.4\%} & {\scriptsize$\pm$303.0\%} & {\scriptsize$\pm$184.2\%} & {\scriptsize$\pm$5.7\%} & {\scriptsize$\pm$295.5\%} & {\scriptsize$\pm$294.0\%} \\
        \midrule
        \multirow[t]{4}{*}{Yeast} 
        & \multirow[t]{2}{*}{1} & 4.8\% & 295.4\% & 20.8\% & \textbf{3.1\%} & 127.5\% & 3.3\% & 5.6\% & \textbf{0.9\%} \\
        & & {\scriptsize$\pm$4.9\%} & {\scriptsize$\pm$379.0\%} & {\scriptsize$\pm$31.2\%} & {\scriptsize$\pm$3.5\%} & {\scriptsize$\pm$153.4\%} & {\scriptsize$\pm$3.3\%} & {\scriptsize$\pm$4.6\%} & {\scriptsize$\pm$0.9\%} \\
        \cmidrule{2-10}
        & \multirow[t]{2}{*}{2} & \textbf{12.7\%} & 172.2\% & 919.4\% & 828.9\% & 13.3\% & \textbf{11.6\%} & 820.4\% & 819.7\% \\
        & & {\scriptsize$\pm$0.7\%} & {\scriptsize$\pm$69.3\%} & {\scriptsize$\pm$497.3\%} & {\scriptsize$\pm$215.6\%} & {\scriptsize$\pm$1.0\%} & {\scriptsize$\pm$0.7\%} & {\scriptsize$\pm$213.8\%} & {\scriptsize$\pm$212.5\%} \\\midrule
        \multirow[t]{4}{*}{HIV}
        & \multirow[t]{2}{*}{1} & \textbf{10.4\%} & 243.1\% & 163.1\% & 163.0\% & 99.6\% & \textbf{2.9\%} & 161.1\% & 161.1\% \\
        & & {\scriptsize$\pm$23.6\%} & {\scriptsize$\pm$215.3\%} & {\scriptsize$\pm$127.3\%} & {\scriptsize$\pm$127.1\%} & {\scriptsize$\pm$89.8\%} & {\scriptsize$\pm$2.7\%} & {\scriptsize$\pm$125.8\%} & {\scriptsize$\pm$125.8\%} \\
        \cmidrule{2-10}
        & \multirow[t]{2}{*}{2} & \textbf{40.7\%} & 198.2\% & 603.9\% & 603.9\% & 45.6\% & \textbf{40.4\%} & 597.0\% & 597.0\% \\
        & & {\scriptsize$\pm$97.7\%} & {\scriptsize$\pm$255.3\%} & {\scriptsize$\pm$252.2\%} & {\scriptsize$\pm$252.0\%} & {\scriptsize$\pm$114.3\%} & {\scriptsize$\pm$99.0\%} & {\scriptsize$\pm$249.3\%} & {\scriptsize$\pm$249.3\%} \\
        \bottomrule
    \end{tabular}
\end{table}
The quantization-based methods dominate in accuracy for the lower bounds of both $p\in\{1,2\}$, whereas for the upper bounds of the Wasserstein-1 metric, the upper bound based on entropic regularization with $\varepsilon=0.001N^p$ achieves the best accuracy, although at a significant computational cost.
The triangular symmetry of EMDB-14621 (SARS-CoV-2 spike protein) and EMDB-2484 (Trimeric HIV-1 envelope glycoprotein) seen in \cref{fig:emdb-rotations} are easily detectable as the dips at $120^\circ$ rotation angle.

\begin{figure}
    \centering
    \begin{tabular}{cc}
        \rotatebox{90}{\qquad\qquad \bf Ribosome} &
        \includegraphics[width=\textwidth]{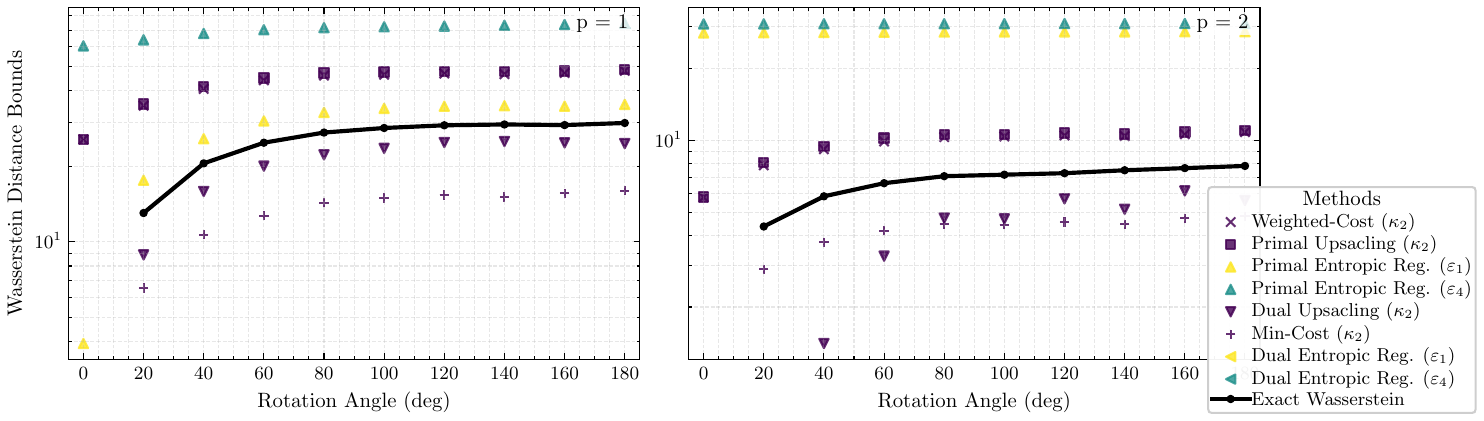}
        \\
        \rotatebox{90}{\qquad\qquad \bf SARS-CoV-2} &
        \includegraphics[width=\textwidth]{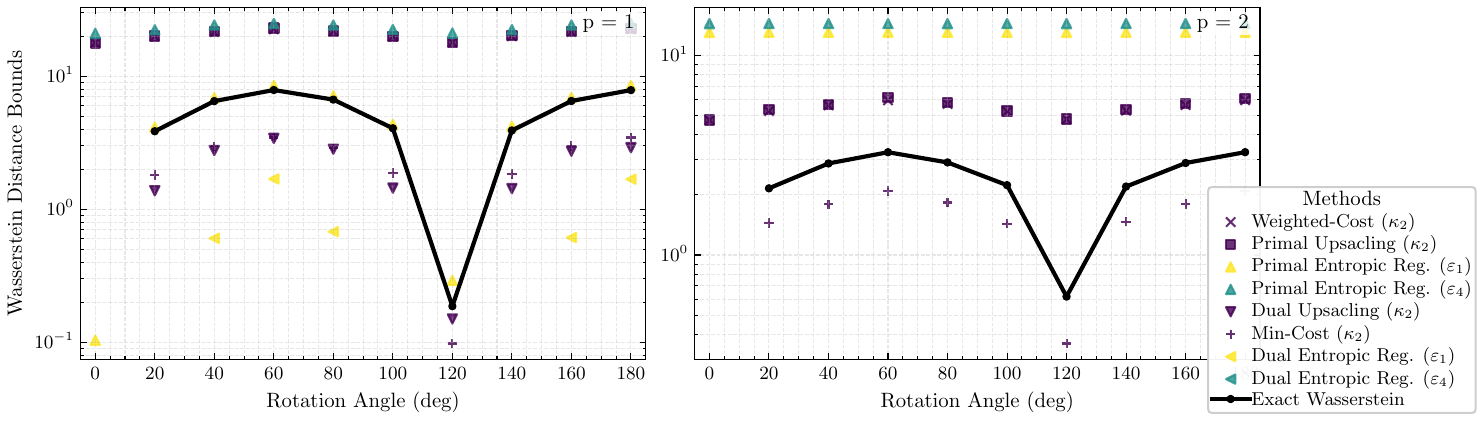}
        \\
        \rotatebox{90}{\qquad\qquad\quad \bf Yeast} &
        \includegraphics[width=\textwidth]{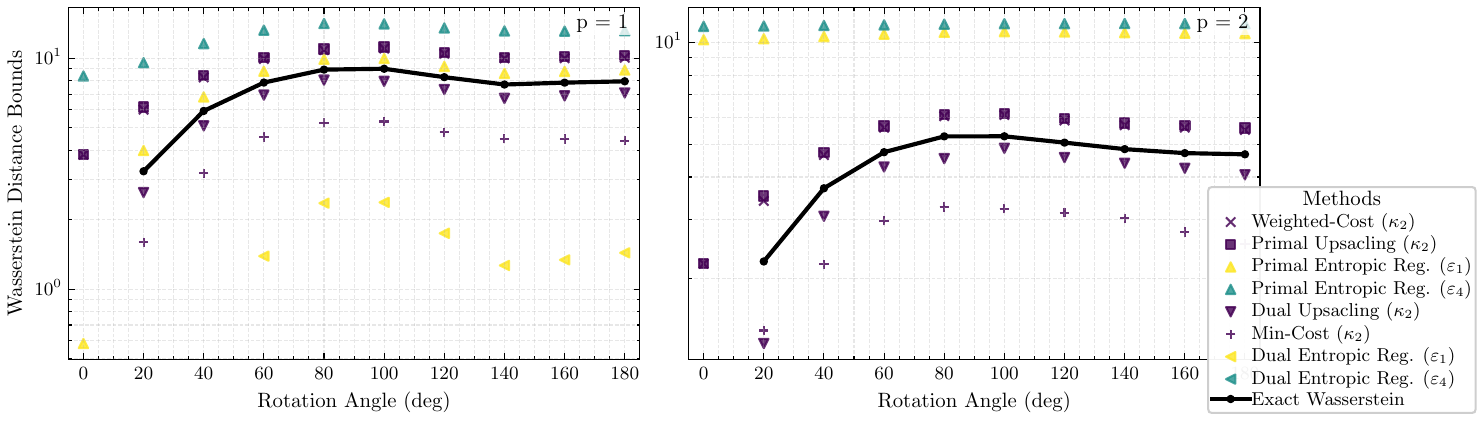}
        \\
        \rotatebox{90}{\qquad\qquad\quad \bf HIV} &
        \includegraphics[width=\textwidth]{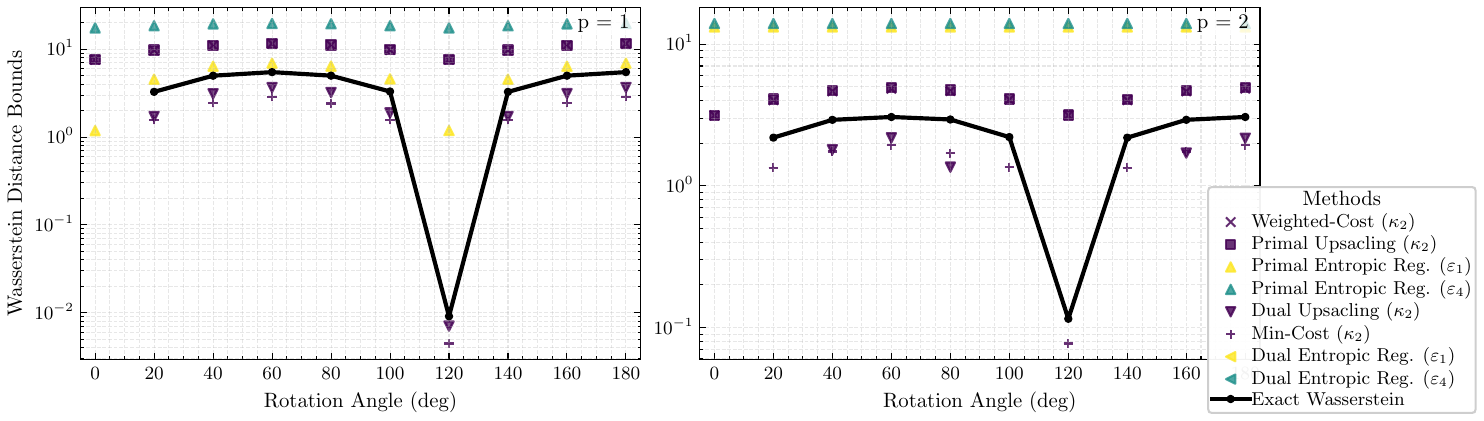}
        \end{tabular}
    \caption{Wasserstein-$p$ metric and bounds between rotated 3D density maps of the molecules described in \cref{tab:emdb}.
    From top to bottom: {\bf Ribosome}, {\bf SARS-CoV-2}, {\bf Yeast}, {\bf HIV}. The thick black line is the exact Wasserstein metric between the molecule and its rotated self, as a function of the rotation angle. The various upper and lower bounds are shown as different color markers. Note the drop around 120 degrees for the SARS-CoV-2 and HIV-1 spikes due to their 3-fold symmetry.}
    \label{fig:emdb-rotations}
\end{figure}

\FloatBarrier
\section{Proofs}\label{sc: proofs}
\subsection{Weighted total variation correction terms}\label{sc: proof of triangle inequality upper bound}
\begin{proof}[Proof of \cref{lm: triangle inequality upper bound}]
    Using the triangle inequality for the Wasserstein metric, we can write
    \begin{equation}
        \Wass_p(\mu,\nu) \le \Wass_p(\mu,\hat{\mu}) + \Wass_p(\hat{\mu},\hat{\nu}) + \Wass_p(\hat{\nu},\nu)
    \end{equation}
    controlling for each element separately, we have
    \begin{align}
           \Wass_p(\mu,\hat{\mu}) \le \mathcal{TV}_p^\vw(\hat{\mu},\mu) \qquad\text{and}\qquad
    \Wass_p(\hat{\nu},\nu) \le \mathcal{TV}_p^\vw(\nu,\hat{\nu})
    \end{align}
 
    by the property of weighted total variation, and
    \begin{align}
        \Wass_p(\hat{\mu},\hat{\nu})
        =
        \left(
        \min_{\pi \in \Pi(\hat{\mu},\hat{\nu})}
        \inner{\pi,C}
        \right)^{\frac{1}{p}}
        \le
        \inner{\hat{\pi},C}^{\frac{1}{p}}
    \end{align}
    by evaluating the transport cost using a coupling in the problem's original space.
\end{proof}
\begin{proof}[Proof of \cref{th: $L^1$ upper bound on weighted total variation}]
    Consider the definition of weighted total variation,
    \begin{align}
        \mathcal{TV}_p & (\hat{\mu}, \mu; w) + \mathcal{TV}_p(\nu, \hat{\nu}; w)
        \\  \nonumber &= 2^{1-\frac{1}{p}} \inner{w,|\hat\mu-\mu|}^{\frac{1}{p}} + 2^{1-\frac{1}{p}} \inner{w,|\nu - \hat\nu|}^{\frac{1}{p}}
        \\  \nonumber &=
        2^{1-\frac{1}{p}} \left(\left(\sum \rho(\bar{x},x)^p |\hat{\mu}_x - \mu_x| \right)^\frac{1}{p} + \left(\sum \rho(\bar{x},x)^p |\nu_x - \hat{\nu}_x| \right)^\frac{1}{p} \right)
        \\  \nonumber &\le
        2^{1-\frac{1}{p}} \Big(r\|\hat{\mu} - \mu\|_1^\frac{1}{p} + r\|\nu - \hat{\nu}\|_1^\frac{1}{p} \Big)
                       &                                                         & \text{bounding radius} \nonumber
        \\ &\le
        2^{2-\frac{2}{p}} \left(\|\hat{\mu} - \mu\|_1 + \|\nu - \hat{\nu}\|_1  \right)^\frac{1}{p} r
                       &                                                         & \text{Jensen's inequality} \nonumber
        \\ &<
        2^{2-\frac{2}{p}} \xi^\frac{1}{p} r
                       &                                                         & \text{convergence criteria} \nonumber
    \end{align}
\end{proof}

\subsection{Proof of \cref{th: weighted cost UB} \label{sc: proof of weighted cost UB}}
First, let us consider the following lemma discussing a coupling constructed ad hoc using coarsened measures.
\begin{lemma}\label{lm: admissible quantization coupling}
    Let  $\mu,\nu$ measures with set of admissible couplings $\Pi(\vmu,\vnu)$, the trivial coupling $\pi_\otimes$ , and  $\tilde{\mu}$ and $\tilde{\nu}$ the respective coarsened measures. For any coupling $\tilde{\pi}\in\Pi(\tilde{\mu},\tilde{\nu})$, the measure on the product space $\calX\times\calY$

    \begin{equation}\label{eq: quantization coupling}
        \pi_{\tilde{\pi}}(x,y) := \frac{\tilde{\pi}_{k_\sX(x)\ell_\sY(y)}}{\mu(X_{k_\sX(x)})\nu(Y_{\ell_\sY(y)})}\pi_{\otimes} (x,y)
    \end{equation}
    is an admissible coupling $\pi_{\tilde{\pi}} \in \Pi(\vmu,\vnu)$, where the coarsening inverse index functions $k_\sX(x),\,\ell_\sY(y)$ are defined as  $k_\sX(x):=\{k: x\in X_k\}, \ell_\sY(y):=\{\ell: y\in Y_\ell\}$.
\end{lemma}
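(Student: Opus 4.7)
The plan is to verify the two defining properties of an admissible coupling: non-negativity and the two marginal constraints. Non-negativity is immediate from the definition, since $\tilde{\pi}_{k\ell}$, $\mu(x)$, $\nu(y)$ are all non-negative, and $\pi_\otimes(x,y) = \mu(x)\nu(y)$. The only subtlety is that the denominator $\mu(X_k)\nu(Y_\ell)$ could vanish; however, this can only happen when either $\mu(x) = 0$ for all $x \in X_k$ or $\nu(y) = 0$ for all $y \in Y_\ell$, in which case the numerator $\tilde{\pi}_{k\ell}\mu(x)\nu(y)$ also vanishes, and we extend the quotient by $0$ by convention. I would dispatch this point briefly before moving on.

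The main computation is the verification of the marginal constraints, and I would focus on $\pi_{\tilde{\pi}}\ones = \vmu$; the other constraint $\pi_{\tilde{\pi}}^\top \ones = \vnu$ follows by an entirely symmetric argument. Fix an index $k$ and a point $x \in X_k$. Summing over $y \in \calY$ and partitioning the sum by the hypercubes $Y_\ell$ yields
\begin{align}
    \sum_{y \in \calY} \pi_{\tilde{\pi}}(x,y)
    &= \sum_{\ell} \sum_{y \in Y_\ell}
    \frac{\tilde{\pi}_{k\ell}}{\mu(X_k)\nu(Y_\ell)}\,\mu(x)\nu(y) \\
    &= \frac{\mu(x)}{\mu(X_k)} \sum_{\ell} \frac{\tilde{\pi}_{k\ell}}{\nu(Y_\ell)} \underbrace{\sum_{y \in Y_\ell} \nu(y)}_{= \nu(Y_\ell)}
    = \frac{\mu(x)}{\mu(X_k)} \sum_{\ell} \tilde{\pi}_{k\ell}.
\end{align}
The inner sum equals $\tilde{\mu}_k$ by the marginal constraint $\tilde{\pi}\ones = \tilde{\vmu}$, and by the definition of the coarsened measure in \cref{sec:coarsening} we have $\tilde{\mu}_k = \mu(X_k)$. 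The two factors of $\mu(X_k)$ cancel, leaving $\mu(x)$, as required.

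I do not expect a serious obstacle: the construction is designed so that \emph{within} each block $X_k \times Y_\ell$ the restricted measure is a rescaling of the trivial coupling $\mu(x)\nu(y)$, which automatically marginalizes correctly to $\mu|_{X_k}$ and $\nu|_{Y_\ell}$ up to a normalization by $\mu(X_k)\nu(Y_\ell)$; the coarse coupling constraint $\sum_\ell \tilde{\pi}_{k\ell} = \tilde{\mu}_k = \mu(X_k)$ then supplies exactly the missing factor to recover $\mu(x)$ after the sum. The only care needed is the degenerate-block convention described above, which I would mention once and then ignore.
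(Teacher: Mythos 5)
Your proof is correct and follows essentially the same route as the paper's: both decompose the sum over $\calX\times\calY$ into blocks $X_k\times Y_\ell$, cancel the normalization $\mu(X_k)\nu(Y_\ell)$ against the block sums of $\pi_\otimes$, and invoke the coarse marginal constraint $\sum_\ell\tilde\pi_{k\ell}=\tilde\mu_k=\mu(X_k)$; the paper merely phrases the marginal check via test functions $\varphi,\psi$ (Villani's coupling characterization) rather than pointwise. Your explicit treatment of the degenerate blocks with $\mu(X_k)=0$ or $\nu(Y_\ell)=0$ is a small point of care the paper omits.
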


\begin{proof}[Proof of \cref{lm: admissible quantization coupling}]
    Following Definition 1.1 (Coupling) from \citep{villaniOptimalTransportOld2009}, one can show $\pi_{\tilde{\pi}}(x, y)$ \cref{eq: quantization coupling} is admissible. For $\varphi, \psi$ be any integrable measurable functions on $\calX,\calY$ respectively, than $\pi_{\tilde{\pi}}(x, y)$ admits

    \begin{align}
         & \int_{\calX \times \calY} \big(\varphi(x)+\psi(y)\big)\mathrm{d}\pi_{\tilde{\pi}}(x, y) \\  \nonumber &= 
        \sum_{k,\ell} \int_{X_{k} \times Y_{\ell}} \big(\varphi(x)+\psi(y)\big) \mathrm{d}\pi_{\tilde{\pi}}(x, y)\\  \nonumber
        &=
        \sum_{k,\ell} \int_{X_{k}\times Y_{\ell}}\big(\varphi(x)+\psi(y)\big) \frac{\bm{\Pi}_{k\ell}}{\mu(X_{k})\nu(Y_{\ell})} \mathrm{d}\mu(x) \mathrm{d}\nu(y) \qquad\quad\text{plug-in coupling's definition} \\ \nonumber
        &=
        \sum_{k,\ell} \Big(\frac{\tilde{\pi}_{k\ell}}{\mu(X_{k})\nu(Y_{\ell})}\int_{X_{k}\times Y_{\ell}}\varphi(x) \mathrm{d}\mu(x) \mathrm{d}\nu(y) + \frac{\tilde{\pi}_{k\ell}}{\mu(X_{k})\nu(Y_{\ell})} \int_{X_{k}\times Y_{\ell}}\psi(y)  \mathrm{d}\mu(x) \mathrm{d}\nu(y) \Big) \\ \nonumber
        &=
        \sum_{k,\ell} \frac{\tilde{\pi}_{k\ell}}{\mu(X_{k})}\int_{X_{k}}\varphi(x) \mathrm{d}\mu(x) + \sum_{k,\ell} \frac{\tilde{\pi}_{k\ell}}{\nu(Y_{\ell})} \int_{Y_{\ell}}\psi(y) \mathrm{d}\nu(y) \qquad\text{sum over marginals}\\ \nonumber
        &=
        \sum_{k} \int_{X_{k}}\varphi(x) \mathrm{d}\mu(x) + \sum_{\ell} \int_{Y_{\ell}}\psi(y) \mathrm{d}\nu(y)\\ \nonumber
        &=
        \int_{\calX}\varphi(x) \mathrm{d}\mu(x) + \int_{\calY}\psi(y) \mathrm{d}\nu(y)
    \end{align}
\end{proof}
Next, we consider the transport cost of such a coupling.
\begin{lemma}\label{lm: weighted cost transport identity}
    The transport loss assigned by the cost $c(x,y)$ and a coupling $\pi_{\tilde{\pi}}$ identifies with the coarse transport loss assigned by marginally weighted cost $\bar{C}$ \cref{eq: marginally weighted cost} and the coarse coupling $\tilde{\pi}$,
    \begin{equation}
        \inner{\pi_{\tilde{\pi}}, C} = \inner{\tilde{\pi}, \bar{C}}
    \end{equation}
\end{lemma}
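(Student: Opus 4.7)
The plan is a direct computation: expand the full-scale inner product as a sum over the partition cells, substitute the definition of $\pi_{\tilde{\pi}}$ from \eqref{eq: quantization coupling}, and recognize the inner partial sum as the weighted-cost entry $\bar{C}_{k\ell}$ times the coarse marginal factors.

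First I would write
\begin{align*}
\inner{\pi_{\tilde{\pi}}, C}
= \sum_{x \in \calX, y \in \calY} \pi_{\tilde{\pi}}(x,y)\, \rho(x,y)^p
= \sum_{k,\ell}\ \sum_{x \in X_k, y \in Y_\ell} \pi_{\tilde{\pi}}(x,y)\, \rho(x,y)^p,
\end{align*}
which is valid because $\{X_k \times Y_\ell\}_{k,\ell}$ is a partition of $\calX \times \calY$. On the inner cell, the inverse index functions satisfy $k_\sX(x) = k$ and $\ell_\sY(y) = \ell$, so $\pi_{\tilde{\pi}}(x,y) = \tilde{\pi}_{k\ell}\, \mu(x)\nu(y)/\big(\mu(X_k)\nu(Y_\ell)\big)$ by the definition in \eqref{eq: quantization coupling} combined with $\pi_{\otimes}(x,y) = \mu(x)\nu(y)$.

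Next, since the ratio $\tilde{\pi}_{k\ell}/\big(\mu(X_k)\nu(Y_\ell)\big)$ is constant over $(x,y) \in X_k \times Y_\ell$, I would pull it out of the inner sum to obtain
\begin{align*}
\inner{\pi_{\tilde{\pi}}, C}
= \sum_{k,\ell} \frac{\tilde{\pi}_{k\ell}}{\mu(X_k)\nu(Y_\ell)}\ \sum_{x \in X_k, y \in Y_\ell} \rho(x,y)^p\, \mu(x)\nu(y).
\end{align*}
The inner sum matches exactly the numerator of $\bar{C}_{k\ell}$ in \eqref{eq: marginally weighted cost}, so it equals $\bar{C}_{k\ell}\,\mu(X_k)\nu(Y_\ell)$. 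Cancelling the $\mu(X_k)\nu(Y_\ell)$ factors gives $\sum_{k,\ell} \tilde{\pi}_{k\ell}\,\bar{C}_{k\ell} = \inner{\tilde{\pi}, \bar{C}}$, which concludes the identity.

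The only subtlety worth flagging is the case where $\mu(X_k) = 0$ or $\nu(Y_\ell) = 0$ for some cell; in that case $\tilde{\mu}_k = 0$ or $\tilde{\nu}_\ell = 0$, forcing $\tilde{\pi}_{k\ell} = 0$ by the coarse marginal constraints, and simultaneously $\mu(x)\nu(y) = 0$ for every $(x,y) \in X_k \times Y_\ell$, so both sides contribute $0$ and the $0/0$ ratio can be defined as $0$ without loss of generality. Apart from this bookkeeping, the argument is a one-line rearrangement, so I do not anticipate a substantive obstacle.
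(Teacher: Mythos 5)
Your proof is correct and follows essentially the same route as the paper's: partition the sum over the cells $X_k \times Y_\ell$, substitute the definition of $\pi_{\tilde{\pi}}$, pull out the constant ratio $\tilde{\pi}_{k\ell}/(\mu(X_k)\nu(Y_\ell))$, and identify the inner sum with the numerator of $\bar{C}_{k\ell}$. Your remark about the degenerate cells with $\mu(X_k)=0$ or $\nu(Y_\ell)=0$ is a sensible piece of bookkeeping that the paper's proof leaves implicit.
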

\begin{proof}
    \begin{align}
        \inner{\pi_{\tilde{\pi}}, C} & =
        \sum_{i,j} c(x_i,y_j) \pi_{\tilde{\pi}}(x_i,y_j) =
        \sum_{k,\ell} \sum_{\substack{x\in X_k                                   \\y\in Y_\ell}} c(x,y)\pi_{\tilde{\pi}}(x,y) \\  \nonumber &=
        \sum_{k,\ell} \sum_{\substack{x\in X_k                                   \\y\in Y_\ell}} c(x,y) \frac{\tilde\pi_{k\ell}}{\mu(X_{k})\nu(Y_{\ell})} \pi_{\otimes}(x,y) \\  \nonumber &=
        \sum_{k,\ell} \frac{1}{\mu(X_{k})\nu(Y_{\ell})} \sum_{\substack{x\in X_k \\y\in Y_\ell}} c(x,y) \mu(x)\nu(y)\,\tilde\pi_{k\ell} \\  \nonumber &=
        \sum_{k,\ell} \bar{C}_{k\ell} \tilde\pi_{k\ell} =
        \inner{\tilde{\pi}, \bar{C}}
    \end{align}
\end{proof}

Finally, we can write
\begin{proof}[Proof of \cref{th: weighted cost UB}]
    Based on admissibility of $\pi_{\tilde{\pi}}$ shown in \cref{lm: admissible quantization coupling} the transport cost
    \begin{align}
        \inner{\pi_{\tilde{\pi}}, C} \ge L_C(\mu,\nu),\ \forall \tilde{\pi}\in\Pi(\tilde{\mu},\tilde{\vnu}).
    \end{align}
    In particular, for $\tilde{\pi}^* = \argmin_{\tilde{\pi} \in \Pi(\tilde{\vmu}, \tilde{\vnu})} \inner{\tilde{\pi},\bar{C}}$,
    \begin{equation}
        \inner{\tilde{\pi}^*,\bar{C}} = \inner{\pi_{\tilde{\pi}^*},C} \ge L_C(\mu,\nu)
    \end{equation}
    by the identity shown in \cref{lm: weighted cost transport identity}.
\end{proof}

\subsection{Additional proofs}\label{sec: additional proofs}
\begin{proof}[Proof of \cref{th: locally-minminal cost LB}]
    Consider
    \begin{align}
        \pi^* = \argmin_{\pi\in\Pi(\vmu,\vnu)}\inner{\pi,C}
    \end{align}
    and coarsening of the optimal coupling
    \begin{align}
        \hat{\pi}^*_{k\ell} := \sum_{\substack{
                x\in X_k\\
                y\in Y_\ell}}
        \pi^*(x,y)
    \end{align}
    such that,
    \begin{align}
        L_C(\vmu,\vnu) 
        &= \inner{\pi^*,C} \\ \nonumber
        &= \sum_{\substack{
        x\in \calX                           \\
                y\in \calY}}
        \rho(x,y)^p \pi^*(x,y) \\ \nonumber
        &= \sum_{k,\ell}\sum_{\substack{
        x\in X_k                             \\
                y\in Y_\ell}}
        \rho(x,y)^p \pi^*(x,y)               \\ \nonumber
        &\ge  \sum_{k,\ell} C_{k\ell}^{\min}
        \sum_{\substack{
        x\in X_k                             \\
                y\in Y_\ell}}
        \pi^*(x,y) \\ \nonumber
        &= \inner{\hat{\pi}^*, C^{\min}} \\ \nonumber
        &\ge \min_{\tilde{\pi}\in\Pi(\tilde{\vmu},\tilde{\vnu})}\inner{\tilde\pi, C^{\min}} \\ \nonumber
        &= L_{C^{\min}}(\tilde{\mu},\tilde{\nu}).
    \end{align}
\end{proof}

\begin{proof}[Proof of \cref{lm: normalized coupling}]
    Let $\tilde{\pi}^* \in \Pi(\tilde{\vmu}, \tilde{\vnu})$ be the coarse optimal coupling and $\tK$ be the positive valued normalized kernel tensor satisfying $\sum_{t \in [\kappa]^{2d}} \tK_t = 1$. Recall that $\hat{\tP} = \tilde{\tP}^* \otimes \tK$ and $\hat{\pi}$ is obtained by reshaping $\hat{\tP}$.
    
    First, we show that the sum of all elements equals 1:
    \begin{align}
        \sum_{i=1}^{N^d}\sum_{j=1}^{N^d} \hat{\pi}_{ij}
        &= \sum_{t \in [\kappa]^{2d}} \sum_{k=1}^{n^d}\sum_{\ell=1}^{n^d} \tilde{\pi}^*_{k\ell} \tK_t \\ \nonumber
        &= \sum_{k=1}^{n^d}\sum_{\ell=1}^{n^d} \tilde{\pi}^*_{k\ell} \sum_{t \in [\kappa]^{2d}} \tK_t \\ \nonumber
        &= \sum_{k=1}^{n^d}\sum_{\ell=1}^{n^d} \tilde{\pi}^*_{k\ell} \cdot 1 
        = 1
    \end{align}
    where the last equality follows from $\tilde{\pi}^*$ being a coupling.
    
    Second, we show that $\hat{\pi}$ is non-negative. Since $\tilde{\pi}^*$ is a coupling, it is non-negative, and $\tK$ is a positive-valued kernel, their tensor product $\hat{\tP}$ and its reshaped form $\hat{\pi}$ are non-negative.
    
    Thus, $\hat{\pi}$ satisfies all the properties of a coupling measure.
\end{proof}

\newpage
\section{Table of notations}
\begin{table*}[ht] \small
    \caption{Table of Notations}
    \begin{center}
        \begin{tabular}{@{}llp{9cm}@{}}
            \toprule
            \textbf{Notation}                    & \textbf{Category} & \textbf{Description}                                                 \\
            \midrule
            \multicolumn{3}{l}{\textit{Sets and Spaces}}                                                                                    \\
            \addlinespace
            $\R_+$                               & Set               & Non-negative real numbers                                            \\
            $[n]$                                & Set               & Set of integers $\{1,\dots,n\}$                                      \\
            $\Sigma_N$                           & Space             & Probability simplex $\{(p_1,\dots,p_N) \in \R_+^N: \sum_i p_i = 1\}$ \\
            $\calX, \calY$                       & Set               & Point sets where measures are defined                                \\
            $\sX$                                & Set               & Set of non-overlapping hypercubes covering the grid                  \\
            $\tilde{\calX}, \tilde{\calY}$       & Set               & Coarse grids (set of hypercube centers)                              \\
            $X_k, Y_\ell$                        & Set               & Individual hypercubes in the partition                               \\
            \midrule
            \multicolumn{3}{l}{\textit{Measures and Vectors}}                                                                               \\
            \addlinespace
            $\zeros_n, \ones_n$                  & Vector            & All-zeros and all-ones vectors in $\R^n$, respectively \\
            $\mu, \nu$                           & Measure           & Discrete probability measures                                        \\
            $\vmu, \vnu$                         & Vector            & Vector representations of measures $\mu, \nu$                        \\
            $\tilde{\mu}, \tilde{\nu}$           & Measure           & Coarsened measures                                                   \\
            $\tilde{\vmu}, \tilde{\vnu}$         & Vector            & Vector representations of coarsened measures                         \\
            $\vf, \vg$                           & Vector            & Kantorovich potentials                                               \\
            $\va, \vb$                           & Vector            & Sinkhorn scaling vectors                                             \\
            \midrule
            \multicolumn{3}{l}{\textit{Matrices and Tensors}}                                                                               \\
            \addlinespace
            $C$                                  & Matrix            & Ground-cost matrix                                                   \\
            $\tilde{C}$                          & Matrix            & Coarse cost matrix (center-based)                                    \\
            $\bar{C}$                            & Matrix            & Coarse cost matrix (average-based)                                   \\
            $\pi$                                & Matrix            & Transport plan (coupling matrix)                                     \\
            $\pi^*$                              & Matrix            & Optimal transport coupling                                           \\
            $\tilde{\pi}$                        & Matrix            & Coarse coupling                                                      \\
            $\tK$                                & Tensor            & Normalized kernel tensor                                             \\
            \midrule
            \multicolumn{3}{l}{\textit{Functions and Operations}}                                                                           \\
            \addlinespace
            $\rho$                               & Function          & Distance metric                                                      \\
            $\Wass_p$                            & Function          & Wasserstein-$p$ metric                                               \\
            $L_C$                                & Function          & Optimal transport cost for ground-cost $C$                           \\
            $\mathcal{TV}_p^\vw$                 & Function          & Weighted total variation                                             \\
            $\otimes$                            & Operation         & Tensor product                                                       \\
            $\odot, \oslash$                     & Operation         & Pointwise multiplication, division                                   \\
            $\inner{\cdot,\cdot}$                & Operation         & Standard vector/matrix inner product                                 \\
            \midrule
            \multicolumn{3}{l}{\textit{Parameters and Constants}}                                                                           \\
            \addlinespace
            $p$                                  & Parameter         & Order of Wasserstein metric ($p \geq 1$)                             \\
            $\kappa$                             & Parameter         & Scale factor for coarsening                                          \\
            $\xi$                                & Parameter         & Convergence threshold for fitting                                    \\
            $N$                                  & Constant          & Side length of regular grid                                          \\
            $d$                                  & Constant          & Dimension of the space                                               \\
            $n$                                  & Constant          & Side length of coarse grid ($n = N/\kappa$)                          \\
            $\bar{x}$                            & Constant          & Center point of space $\calX$                                        \\
            $r$                                  & Constant          & Radius of space $\calX$                                              \\
            $\Delta_{\hat\mu}, \Delta_{\hat\nu}$ & Constant          & Marginal weighed total variation corrections                         \\
        \midrule
            \multicolumn{3}{l}{\textit{Code}}                                                                                               \\
            \addlinespace
            \code{AvgPool}, \code{SumPool}                       & Function          & Average/sum pooling layer with identical kernel size and stride         \\
            \code{mean}                          & Function          & Mean of a set of points                                              \\
            \code{reshape}                       & Function          & Cardinality preserving tensor shape transformation                   \\
            \bottomrule
        \end{tabular}
    \end{center}
    \label{tbl: notations}
\end{table*}

\end{document}